\begin{document}
\bibliographystyle{IEEEtran}

\newtheorem{lemma}{Lemma} 
\newtheorem{corollary}{Corollary} 
\newtheorem{theorem}{Theorem} 
\newtheorem{definition}{Definition}
\newtheorem{remark}{Remark}
\newtheorem{example}{Example}

\newcommand{\Sim}[2]{\langle #1, \,#2\rangle_\mathcal{P}}
\newcommand{\Dis}[2]{|| #1\,,\;\,#2 ||_\mathcal{P}}
\newcommand{\Inf}[2]{\mathcal{I}_\mathcal{P}(#1\,,\;#2)}
\newcommand{\inff}[2]{\mathfrak{I}_{#1}(#2)}
\newcommand{\X}{\mathcal{X}}
\newcommand{\Y}{\mathcal{Y}}
\newcommand{\D}{\mathfrak{D}}
\newcommand{\Prob}{\mathbb{P}}
\newcommand{\E}{\mathbb{E}}
\newcommand{\R}{\mathbb{R}}
\newcommand{\F}{\mathcal{F}}
\newcommand{\HH}{\mathcal{H}}
\newcommand{\1}{\mathbb{I}}
\newcommand{\LL}{\mathcal{L}}
\newcommand{\Z}{\mathcal{Z}}
\newcommand{\G}{\mathcal{G}}
\newcommand{\SSS}{\mathcal{S}}
\newcommand{\PP}{\mathbb{P}}
\newcommand{\RES}{\textbf{\textup{Ess}}}

\title{A Mathematical Theory of Learning}
\author{Ibrahim~Alabdulmohsin
\thanks{I. Alabdulmohsin is with the Computer, Electrical and Mathematical Sciences and Engineering (CEMSE) Division at King Abdullah University of Science \& Technology (KAUST), Thuwal, Saudi Arabia. e-mail: ibrahim.alabdulmohsin@kaust.edu.sa}
\thanks{}}
\maketitle

\begin{abstract} 
In this paper, a  mathematical theory of learning is proposed that has many parallels with information theory. We consider Vapnik's General Setting of Learning in which the  learning process is defined to be the act of selecting a hypothesis in response to a given training set. Such hypothesis can, for example, be a decision boundary in classification, a set of  centroids in clustering, or a set of frequent item-sets in association rule mining. Depending on the hypothesis space and how the final hypothesis is selected, we show that a learning process can be assigned a numeric score, called \emph{learning capacity}, which is analogous to Shannon's channel capacity and satisfies similar interesting properties as well such as the data-processing inequality and the information-cannot-hurt inequality.  In addition, learning capacity provides the tightest possible bound on the difference between true risk and empirical risk of the learning process for all  loss functions that are parametrized by the chosen hypothesis. It is also shown that the notion of learning capacity equivalently quantifies how sensitive the choice of the final hypothesis is to a small perturbation in the training set. Consequently, algorithmic stability is both necessary and sufficient for generalization. While the theory does not rely on concentration inequalities, we finally show that analogs to classical results in learning theory using the Probably Approximately Correct (PAC) model can be immediately deduced using this theory, and conclude with information-theoretic bounds to learning capacity. 
\end{abstract} 

\section{Introduction} 
In this paper, we consider the General Setting of Learning introduced by Vapnik \cite{vapnik1999overview} in which a \emph{learning machine} $\LL$ is presented with a training set $S_m = \{Z_1,\ldots,\,Z_m\}\in\Z^m$ whose $m$ training examples $Z_i$ are drawn i.i.d. from a fixed unknown distribution $\PP(z)$. The task of the learning machine is to pick a hypothesis $H$ out of a pre-defined hypothesis space $\HH^{(m)}$ in order to \lq\lq summarize'' or \lq\lq fit'' the training set. In general, we assume that the observation space $\Z$ may or may not be numerical and that the hypothesis space $\HH^{(m)}$ can vary according to $m$. 

For example, in a binary classification task, a classifier may be presented with feature-label pairs $Z=(X,\,Y)\in\X\times\{-1,\,+1\}$ and the goal is to choose a function $f(x):\X\to\{-1,\,+1\}$ that can accurately predict the unknown label $Y$ given $X$. Here, any choice of $f(x)$ serves as an instance of $H$. In mean estimation, such as when we would like to predict $X\in\mathbb{R}^n$ using its expected value $\E[X]$, the hypothesis $H$ can be the empirical average of training examples whose space $\HH^{(m)}$ is the entire plane $\mathbb{R}^n$. In the latter case, $H$ is a deterministic function of the training set $S_m$. Finding weights in neural networks, prototypes in clustering methods, enclosing spheres in some outlier detection algorithms, and frequent itemsets in association rule mining are only a handful of examples to learning tasks that fall under such general learning setting. 

In the general learning setting, the learning machine $\LL$ picks an instance of $H$, denoted $h\in\HH^{(m)}$, according to some \emph{fixed} learning process. We model such  learning process as a probability distribution $H\sim\PP^{(m)}(h\,|\,S_m)$. Again, the process generally depends on the number of training examples. For example, if observations are binary $Z\in\{0,\,1\}$ and $\LL$ generates the sum of observations $H = \sum_{i=1}^m\, Z_i$, then $\HH^{(m)} = \{0,1,\ldots,\,m\}$. In the latter case, the probability of picking a specific hypothesis $\PP(H=h)$ given an instance of a training set $S_m=s_m$ is a degenerate distribution; the probability is one if $h$ is the sum of all training examples in $s_m$ and is zero otherwise. In general, however, the hypothesis $H$ might be a random function of $S_m$. Once a hypothesis is inferred, the learning process is concluded. 

In order to assess \emph{quality} of the inference, we assume a non-negative loss function with bounded range exists $L_H(Z):\,\Z\to [0,\,1]$, that is conditionally independent of the training set $S_m$ given $H$. That is, we assume that the Markov chain $S_m\to H\to L_H(Z)$ always hold. Formally, $L_H(Z)$ is a function of \emph{both} the inferred hypothesis $H$ and the observation $Z$. For example, an observation in SVM is a pair of features plus class label $Z=(X,\,Y)\in\mathbb{R}^n\times\{-1,\,+1\}$. Here, the hypothesis $H$ can be a separating hyperplane, i.e. $H$ takes its values from $\{(w,\,b)\,|\,w\in\mathbb{R}^n,\,b\in\mathbb{R}\}$ for some normal vector $w$ and offset $b$. In addition, one possible loss function is given by $L_H(Z)=L_{w,b}(x,\,y)=\1\{y(w^T\,x-b)\le 0\}$\footnote{Here, $\1\{\cdot\}$ is a 0-1 Boolean indicator.}, which is conditionally independent of the original training set $S_m$ given the inferred hypothesis $H=(w,\,b)$.

Having a loss function $L_H(Z)$ at hand, we define the \emph{true risk} of a hypothesis $H$ with respect to $L_H(Z)$ by the risk functional: 
\begin{equation}\label{true_risk} 
\hat R(H) = \E_{Z\sim\PP(z)}\;\big[L_H(Z) \big]
\end{equation}
Here, the true risk of any instance of $H$ is the expected value of $L_H(Z)$, where expectation is taken over $Z\sim\PP(z)$. We define the \emph{risk of the learning machine} $\LL$ with respect to $L_H(Z)$, denoted $R(\LL)$, to be the expected risk of its inferred hypothesis, where expectation is taken over all possible training sets and over all possible hypotheses. Precisely, we have: 
\begin{equation}\label{true_risk_learn} 
R(\LL) = \E_{S_m} \E_{H|S_m}\;\hat R(H)
\end{equation}

The \emph{ideal} final goal is to be able to obtain an unbiased estimator to the \emph{true risk} of a learning machine $R(\LL)$ given that we know its inference process. This allows us to quantify quality of the inference. One convenient estimator is the empirical loss, which for a fixed training set $S_m=s_m$ and a fixed hypothesis $H=h$ is defined as: 
\begin{equation}
R_{emp}(h,\,s_m) = \frac{1}{m}\,\sum_{z_i\in S_m}\, L_h(z_i)
\end{equation} 
Note in the above expression that $L_h(z)$ is implicitly a function of $h$. Analogously, we define the empirical risk of the learning machine $\LL$ by the expected empirical risk of its inferred hypothesis: 
\begin{equation}\label{emp_risk_eq}
R_{emp}(\LL) = \E_{S_m}\,\E_{H|S_m}\,R_{emp}(H,\,S_m)
\end{equation}  
An unbiased estimator to $R_{emp}(\LL)$ is usually available since both training examples and the inferred hypothesis are often known. Unfortunately, however, it has long been established that the empirical risk is a \emph{biased} (optimistic) estimator to the true risk. Hence, we desire to bound the difference $|R(\LL)-R_{emp}(\LL)|$ analytically in order to be able to correct for such bias. Such bound would hopefully shed some insight into the many phenomena associated with learning such as overfitting, underfitting, and the importance of algorithmic stability. 

However, quantifying the difference between true risk and empirical risk is quite subtle and several methods have been proposed in the past to answer it including uniform convergence, stability, Rademacher and Gaussian complexities, generic chaining bounds, the PAC-Bayesian framework, and  robustness-based analysis \cite{vapnik1999overview,blumer1989learnability,talagrand1996majorizing,mcallester1999some,mcallester2003pac,bousquet2002stability,bartlett2002rademacher,audibert2007combining,xu2012robustness}. Moreover, extensions to the semi-supervised setting have been proposed as well \cite{pacsemisupervised2005}. Concentration inequalities form the building blocks of such rich theories.

In this paper, a new approach of bounding the difference between true risk and empirical risk is introduced. Unlike earlier approaches, the mathematical theory presented here does not treat such difference as a problem of convergence of random variables to their expectations. In fact, we will show that even though observations $Z$ are always assumed to be drawn i.i.d. from the same underlying distribution $\PP(z)$, both in the past and into the future, true and empirical risks of a learning machine have different distributions because {the process of learning changes our \emph{posterior} distribution of the training set}. We will see that the theory can be confirmed numerically quite readily, and that it is rich enough to capture some of deepest aspects of  learning  even though we are only dealing with averages.

As will be demonstrated throughout the sequel, the learning theory proposed in this paper has many parallels with information theory. In particular, a notion of \emph{mutual affinity} in learning is closely related to mutual information, and the notion of \emph{learning capacity} is quite analogous to the  capacity of communication channels. In addition, important inequalities in information theory such as the \lq\lq data-processing'' inequality and the \lq\lq information-cannot-hurt'' inequality \cite{cover2012elements} have analogs within the theory of learning. The asymptotic equipartition property (AEP) plays a key role in both theories as well. In fact, we will also be able to derive information-theoretic bounds, which are close in spirit to the PAC-Bayesian bounds \cite{mcallester1999some,mcallester2003pac}.  

The rest of the paper is structured as follows. We  will first introduce fundamental concepts such as similarity and distance between probability distributions, and present the main theorems after that. We will introduce the notion of \emph{learning capacity}, which provides the tightest possible bound on the difference between true and empirical risks of  learning machines. After that, we present several interpretations of learning capacity. For instance, it will be shown that learning machines admit a partial order with interesting implications, and that capacity and stability of learning machines are two different faces of the same coin. Finally, we show connections between learning capacity and the effective support set size of observations as well as size of the hypothesis space. 

\section{Notation}\label{sec::notation} 
We will always use $\LL$ to refer to a learning machine, which is a formal specification of a learning process and it comprises of three components: 
\begin{enumerate}
\item The observation space $\Z$.
\item A sequence of hypothesis spaces $\HH^{(m)}$ for all $m\ge 1$.
\item A sequence of probability distributions $\PP^{(m)}(H\,|\,S_m)$ for all $m\ge 1$ and all $S_m\in\Z^m$, where $H\in\HH^{(m)}$. 
\end{enumerate} 
Formally, $\LL$ is a tuple:
\begin{equation*} 
\LL = \big(\Z,\;\{\HH^{(m)}\}_{m=1,2,\ldots},\;\{\PP^{(m)}(H\,|\,S_m)\}_{m=1,2,\ldots}\big)
\end{equation*}
Given a learning machine $\LL$, we interpret it by saying that for any $m$ i.i.d. observations $S_m=\{Z_1,\ldots,\,Z_m\}\in\Z^m$ received, a hypothesis $H\in\HH^{(m)}$ is generated randomly according to $\PP^{(m)}(h\,|\,S_m)$. In statistical terms, $H$ can be \emph{any summary statistic} of $S_m$. 

For example, if $\Z\subseteq\mathbb{R}$ and $\HH^{(m)}=\mathbb{R}$, then $H$ can be the mean, the maximum, the median, or any individual training example pick out of $S_m$. In fact, $H$ can also be entirely independent of $S_m$. If $\HH^{(m)} = \Z^m$ and $H=S_m$, we will say that $\LL$ is  a \emph{lazy learner}. A lazy learner receives a training set $S_m$ and returns the training set itself as a hypothesis, hence the name. 

If $X\sim\PP(x)$ is a random variable drawn from the alphabet $\X$ and $f(X)$ is a function of $X$, we write $\E_{X\sim\PP(x)}\,f(X)$ to mean $\sum_{x\in\X}\,\PP(x)\,f(x)$. Often, we will simply write $\E_X\,f(X)$ to mean $\E_{X\sim\PP(x)}\,f(X)$ if the probability distribution $\PP(x)$ is clear from context. If $X$ takes its values from a finite set $S$ uniformly at random, we write $\E_{X\sim S}\,f(X)$ to mean $\frac{1}{|S|}\sum_{x\in S}\,f(x)$. 

In general, random variables will be denoted using capital letters, and instances of random variables will be denoted using small letters. Finally, alphabets are denoted using calligraphic typeface. We will generally restrict attention to the case in which the observation space $\Z$ and the hypothesis space $\HH^{(m)}$ are finite or countably infinite, in which case $\PP(z)$ and $\PP^{(m)}(H\,|\,S_m)$ are probability mass functions, but the main results can be readily generalized.

Finally, we will denote the 0-1 indicator function using $\1\{\cdot\}$. If $X$ is a boolean random variable, then $\1\{X\}=1$ if and only if $X$ is \emph{true}, otherwise $\1\{X\}=0$.  

\section{Fundamental Concepts} 
\subsection{Similarity and Distance}
Our first fundamental concept is the notion of similarity and distance between two probability distributions:

\begin{definition} \label{ProbSimDist} 
Given two probability distributions $\PP_1(a)$ and  $\PP_2(a)$ defined on the same alphabet $\mathcal{A}$, we define similarity using $\Sim{\PP_1}{\PP_2} = \sum_{a\in\mathcal{A}} \min\{\PP_1(a), \,\PP_2(a)\}$. Also, we write $\Dis{\PP_1}{\PP_2} =1-\Sim{\PP_1}{\PP_2}$ to denote the total variation distance. 
\end{definition}

Intuitively speaking, similarity is a measure of the \emph{intersection} or \emph{overlap} between two probability distributions, and is sometimes referred to as \emph{the overlapping coefficient} \cite{reiser1999confidence}. In addition $\Dis{p}{q}$ is {the total variation distance but we will call it distance for simplicity.

Needless to say, the notion of similarity and distance is not analytic. Nevertheless, the following lemma reveals that $\Sim{\cdot}{\cdot}$ and $\Dis{\cdot}{\cdot}$ are, in fact, the infinite limit of a sequence of smooth analytic functionals of probability distributions. 

\begin{lemma}\label{SeriesExpansionLemma}
Let $\PP_1(a)$ and $\PP_2(a)$ be two probability distributions defined on the same alphabet $\mathcal{A}$. Then: 
\begin{equation}
\Dis{\PP_1}{\PP_2} = \prod_{t=1}^\infty \big(1- \sum_{a\in\mathcal{A}}\;\rho^{(t)}(a)\cdot\nu^{(t)}(a)\big)
\end{equation}
Here, $\rho^{(t)}(a)$ and $\nu^{(t)}(a)$ are probability distributions given by the following recursive definition:
\begin{align*}
\rho^{(t)}(a) &= \frac{\rho^{(t-1)}(a)\cdot(1\;-\;\nu^{(t-1)}(a))}{1-\sum_{b\in\mathcal{A}}\;\rho^{(t-1)}(b)\cdot\nu^{(t-1)}(b)},\quad \rho^{(1)}(a) = \PP_1(a)\\
\nu^{(t)}(a) &=  \frac{\nu^{(t-1)}(a)\cdot(1\;-\;\rho^{(t-1)}(a))}{1-\sum_{b\in\mathcal{A}}\;\rho^{(t-1)}(b)\cdot\nu^{(t-1)}(b)},\quad \nu^{(1)}(a) = \PP_2(a)
\end{align*}
\end{lemma}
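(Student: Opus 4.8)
The plan is to record the two structural facts that make the recursion tractable, and then reduce the identity to the statement that the iterates $\rho^{(t)}$ and $\nu^{(t)}$ become mutually singular in the limit. First I would verify by induction that each $\rho^{(t)}$ and $\nu^{(t)}$ is a genuine probability distribution. Writing $S_t = \sum_a \rho^{(t)}(a)\,\nu^{(t)}(a)$ for the common normalizing denominator, the identity $\sum_a \rho^{(t-1)}(a)\big(1-\nu^{(t-1)}(a)\big) = 1 - S_{t-1}$ shows the update preserves total mass and nonnegativity (and that $1 - S_{t-1} > 0$ away from the degenerate situation where $\rho^{(t-1)}=\nu^{(t-1)}$ is a common point mass). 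Setting $P_t = \prod_{s=1}^t (1-S_s)$ with $P_0 = 1$, the claim becomes $\Dis{\PP_1}{\PP_2} = \lim_{t\to\infty} P_t$.

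Second, and this is the key computation, I would subtract the two recursions pointwise. Since both numerators carry the denominator $1-S_t$ and the cross terms $\rho^{(t)}\nu^{(t)}$ cancel, one gets $\rho^{(t+1)}(a) - \nu^{(t+1)}(a) = \big(\rho^{(t)}(a)-\nu^{(t)}(a)\big)/(1-S_t)$, hence by telescoping $\rho^{(t)}(a) - \nu^{(t)}(a) = \big(\PP_1(a)-\PP_2(a)\big)/P_{t-1}$. Summing the positive parts over $a$ and recognizing that $\sum_a(\rho^{(t)}(a)-\nu^{(t)}(a))^+$ is exactly the total variation distance between $\rho^{(t)}$ and $\nu^{(t)}$ yields the self-similar relation $\Dis{\rho^{(t)}}{\nu^{(t)}} = \Dis{\PP_1}{\PP_2}/P_{t-1}$. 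Because a total variation distance never exceeds $1$, this already gives the lower bound $P_{t-1} \ge \Dis{\PP_1}{\PP_2}$ for every $t$; as $P_t$ is nonincreasing it converges to some $P_\infty \ge \Dis{\PP_1}{\PP_2}$.

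Third, I would obtain the matching upper bound by showing the overlap $\sum_a \min\{\rho^{(t)}(a),\nu^{(t)}(a)\}$ vanishes, i.e. $\Dis{\rho^{(t)}}{\nu^{(t)}} \to 1$; the self-similar relation then forces $P_\infty = \Dis{\PP_1}{\PP_2}$. When the distance is positive the lower bound gives $P_\infty > 0$, so the infinite product converges to a nonzero value, which by the standard product criterion forces $\sum_t S_t < \infty$ and hence $S_t \to 0$. Writing $m_t(a) = \min\{\rho^{(t)}(a),\nu^{(t)}(a)\}$, the elementary estimate $m_t(a)^2 \le \sum_b m_t(b)^2 \le S_t$ shows $m_t(a)\to 0$ for each $a$, and on a finite alphabet summing gives $\sum_a m_t(a)\to 0$, as required. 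The degenerate case $\Dis{\PP_1}{\PP_2}=0$ (i.e. $\PP_1=\PP_2$) is handled separately: there $\rho^{(t)}=\nu^{(t)}$ and $S_t = \sum_a \rho^{(t)}(a)^2 \ge 1/|\mathcal{A}|$ by Cauchy--Schwarz, so $P_t \to 0$ as well.

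The main obstacle is this last step---certifying that the product descends all the way to the distance rather than halting at some larger value---which is precisely asymptotic singularity of the iterates. On a finite alphabet the argument above is clean; for a countably infinite alphabet the only gap is interchanging the limit with the sum defining the overlap, and I would close it either by a truncation/dominated-convergence argument (the overlap is monotone in $t$ through $\Dis{\rho^{(t)}}{\nu^{(t)}}=\Dis{\PP_1}{\PP_2}/P_{t-1}$) or by ruling out escape of mass to infinity. Everything else is bookkeeping resting on the one genuinely useful identity, the telescoping of the pointwise difference $\rho^{(t)}-\nu^{(t)}$.
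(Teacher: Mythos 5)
Your proposal is correct, and its skeleton is the same as the paper's: the pointwise identity $\PP_1(a)-\PP_2(a)=\bigl(1-\sum_b\rho^{(t)}(b)\,\nu^{(t)}(b)\bigr)\bigl(\rho^{(t+1)}(a)-\nu^{(t+1)}(a)\bigr)$, which upon summing positive parts gives the one-step recursion $\Dis{\rho^{(t)}}{\nu^{(t)}}=(1-S_t)\,\Dis{\rho^{(t+1)}}{\nu^{(t+1)}}$. Where you genuinely add something is the limit. The paper's proof stops at this recursion and says ``repeating this process indefinitely yields the statement,'' but iterating only gives $\Dis{\PP_1}{\PP_2}=\prod_{t\le T}(1-S_t)\cdot\Dis{\rho^{(T+1)}}{\nu^{(T+1)}}$ for each finite $T$; since the trailing factor is at most $1$, this by itself yields only the one-sided inequality $\Dis{\PP_1}{\PP_2}\le\prod_{t=1}^{\infty}(1-S_t)$ (precisely the bound the paper records right after the lemma). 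Equality requires showing $\Dis{\rho^{(t)}}{\nu^{(t)}}\to 1$ when $\PP_1\ne\PP_2$, i.e.\ asymptotic mutual singularity of the iterates, and your argument for this --- positivity of the limit of the product forces $\sum_t S_t<\infty$, hence $S_t\to 0$, hence the overlap $\sum_a\min\{\rho^{(t)}(a),\nu^{(t)}(a)\}\to 0$ via $\min\{x,y\}^2\le xy$ --- is sound on a finite alphabet, as is your separate treatment of the case $\PP_1=\PP_2$ (where $S_t\ge 1/|\mathcal{A}|$ kills the product) and your observation that the recursion degenerates only at a common point mass. You are also right to flag the countably infinite case as the one remaining gap: there $\sum_a m_t(a)^2\to 0$ does not by itself force $\sum_a m_t(a)\to 0$, so some tightness or truncation argument is needed; the paper's proof is silent on this point as well. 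In short, same decomposition and same key identity, but you have supplied the convergence-of-the-product step that the paper compresses into a single sentence.
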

\begin{proof} 
Writing $\PP_1(a)=\PP_1(a)\,\PP_2(a)\, + \PP_1(a)\,\big(1-\PP_2(a)\big)$, we have: 
\begin{equation*}
\PP_1(a)-\PP_2(a) = \big(1-\sum_{b\in\mathcal{A}}\;\PP_1(b)\cdot\PP_2(b)\big)\cdot \big(\rho^{(2)}-\nu^{(2)} \big)
\end{equation*}
Taking the 1-norm of both sides and using $\Dis{\PP_1}{\PP_2}=\frac{1}{2}\,||\PP_1-\PP_2||_1$, where the subtraction is element-wise, gives us: 
\begin{align*}
\Dis{\PP_1}{\PP_2} = \big(1-\sum_{b\in\mathcal{A}}\;\PP_1(b)\cdot\PP_2(b)\big)\cdot\Dis{\rho^{(2)}}{\nu^{(2)}}
\end{align*}
Repeating this process indefinitely on the right-hand side yields statement of the lemma.
\end{proof}

Of particular importance to us in the above lemma is the following bound, which will be useful later when we discuss algorithmic stability: 
\begin{equation}\label{distUpperBound}
\Dis{\PP_1}{\PP_2} \le \prod_{t=1}^T \big(1-\sum_{a\in\mathcal{A}}\;\rho^{(t)}(a)\cdot\nu^{(t)}(a)\big), \quad \text{for any $T\ge 1$}
\end{equation}

\begin{example}\label{distance_series_example}
Suppose we have two Bernoulli distributions $\PP_1(z)=(s,\,1-s)$ with probability of success $s$, and $\PP_2(z)=(\frac{1}{2}, \,\frac{1}{2})$ with probability of success $\frac{1}{2}$. Their distance is $\Dis{\PP_1}{\PP_2}=|s-\frac{1}{2}|$. The first few distributions $\rho^{(t)}$ and $\nu^{(t)}$ of Lemma \ref{SeriesExpansionLemma} are given by:
\begin{alignat*}{2}
\rho^{(1)} &= (s,\; 1-s)\,, &\quad \nu^{(1)}&=(\frac{1}{2},\; \frac{1}{2})\\
\rho^{(2)} &= (s,\; 1-s)\,, &\quad \nu^{(2)}&=(1-s,\; s)\\
\rho^{(3)} &= \Big(\frac{s^2}{s^2+(1-s)^2},\; \frac{(1-s)^2}{s^2+(1-s)^2}\Big)\\ \nu^{(3)}&=\Big(\frac{(1-s)^2}{s^2+(1-s)^2},\;\frac{s^2}{s^2+(1-s)^2}\Big)\,\\
\end{alignat*}
Figure \ref{ExmBernDisFig} shows first three approximations for $T=1,\,2,\,3$. Clearly, the approximation does approach the true distance $\Dis{\PP_1}{\PP_2}=|s-\frac{1}{2}|$ as expected. 
\begin{figure}[t]
\centering
\includegraphics[scale=0.21]{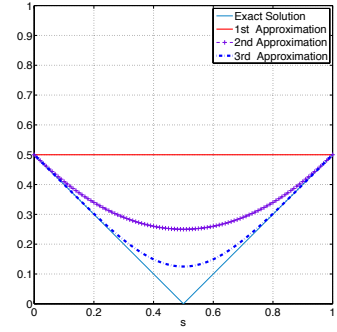}
\caption{First three approximations of Lemma \ref{SeriesExpansionLemma} are plotted against true distance for Example \ref{distance_series_example}.}
\label{ExmBernDisFig}
\end{figure}
\end{example} 

\begin{example}
To gain further insight into the role of total variation distance in learning problems, suppose we have a binary classification problem with features $X\in\X$ and labels $Y\in\{0,\,1\}$. Let $\PP_k(x)=\Prob(X=x|Y=k)$ be the class conditional distribution of features $X$. Define $\kappa = \max\{\Prob(Y=0),\,\Prob(Y=1)\}$. Then, the optimal Bayes rate $e^*$ satisfies: 
\begin{equation*} 
e^* \le \kappa\,\Big(1-\Dis{\PP_0}{\PP_1}\Big)
\end{equation*}
The above inequality holds with equality if $\kappa=\frac{1}{2}$. So, if distribution of the two classes are far away from each other in the total variation distance, then they can be distinguished from each other with high accuracy. 
\end{example}
\begin{proof}
The Bayes rate satisfies: 
\begin{align*}
e^* &= \sum_{x\in\X} \min\Big\{\Prob(X=x,\; Y=0),\; \Prob(X=x,\; Y=1)\Big\}\\
&=  \sum_{x\in\X} \min\Big\{\Prob(X=x|Y=0)\cdot \Prob(Y=0),\\
&\quad\quad\quad\quad\quad\quad \Prob(X=x|Y=1)\cdot\Prob(Y=1)\Big\}\\
&\le \kappa\; \sum_{x\in\X} \min\Big\{\Prob(X=x\;|\; Y=0),\; \Prob(X=x\;|\; Y=1)\Big\}\\
&=\kappa\,\Big(1-\Dis{\PP_0}{\PP_1}\Big)
\end{align*}
\end{proof}

\subsection{Mutual Affinity}
The second fundamental concept in this paper is \emph{mutual affinity}:
\begin{definition}[Mutual Affinity] 
The mutual affinity between two random variables $X_1$ and $X_2$ is defined by: 
\begin{align*}
\Inf{X_1}{X_2} &=\Dis{\Prob(X_1)\cdot \Prob(X_2)}{\Prob(X_1,\,X_2)} \\ 
&=\E_{X_1}\;\Dis{\Prob(X_2)}{\Prob(X_2\;|\;X_1)}\\
&=\E_{X_2}\;\Dis{\Prob(X_1)}{\Prob(X_1\;|\;X_2)}
\end{align*}
\end{definition}
Mutual affinity is quite analogous to  mutual information. In information theory, mutual information between two random variables $X_1$ and $X_2$ is the distance between the hypothesis that the two random variables are independent of each other vs. their true joint distribution, where distance is measured in the Kullback-Leibler divergence sense. In learning theory, mutual affinity is the distance between the same two hypotheses, where distance is now measured in the total variation sense. 
\begin{example}\label{MutAffinityExample_2}
Suppose $X$ and $Y$ are binary random variables with $\PP(Y=X)=1-\epsilon$ as depicted in Figure \ref{MutuAffinityFigure_2}. Then, $\Inf{X}{Y} = |\frac{1}{2}-\epsilon|$. Hence, if $\epsilon=\frac{1}{2}$, then $X$ and $Y$ are independent of each other and mutual affinity is identically zero.
\end{example}

\begin{figure}[H]
		\centering
		\includegraphics[width=5cm, height=4cm]{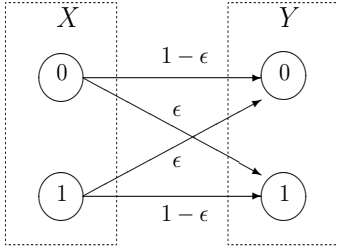}
		\caption{The mutual affinity between $X$ and $Y$ in this example is $|\frac{1}{2}-\epsilon|$.}
		\label{MutuAffinityFigure_2}
\end{figure}

\subsection{Effective Support Set Size}
Our third key  concept is the \emph{effective support set size}. For learning problems whose observations $Z$ are drawn from a probability mass function $\PP(z)$, size of the support set of $\PP(z)$ is important because it relates to how difficult the learning problem is. In other words,  if size of the support set of $\PP(z)$ is large, then large training sets are usually needed. For most problems of interest, however, size of the support set is infinite, hence such comparison is inappropriate. The correct measure of the spread of a probability distribution is given by its effective support set size. 

\begin{definition}[Effective Support Set Size]\label{resolutionDefinition} 
Given a  probability mass function $\PP(z)$ on an alphabet $\Z$, its effective support set size is defined by: 
\begin{equation} 
\RES\,[\PP(z)] = 1+\Big(\sum_{z\in\Z} \sqrt{\PP(z)\,(1-\PP(z))}\Big)^2
\end{equation}
\end{definition}
\begin{example}
At one extreme, let $\PP(z)$ be a \emph{uniform} probability mass function on a finite alphabet $|\mathcal{Z}|<\infty$. Then, $\RES\,[\PP(z)]=|\mathcal{Z}|$. In other words, effective support set size of a uniform distribution is equivalent to size of its true support set. At the other extreme, let $\PP(z)$ be a Kronecker delta distribution $\PP(z) = \delta_{z,z_0}$, whose entire probability mass is located at a single point $z_0$, then $\RES\,[\PP(z)] = 1$.   
\end{example}

\begin{example}\label{geometricResolutionExample}
A geometric distribution $\PP(z)=\alpha\,(1-\alpha)^{z-1}$ defined on the set of natural numbers $\mathcal{Z}=\{1,\,2,\,\ldots\}$ with probability of success $0< \alpha\le 1$ has a finite effective support set size with the following upper bound: 
\begin{equation*}
\RES[\PP(z)] \le 1+\frac{\alpha}{\big(1-\sqrt{1- \alpha}\big)^2}
\end{equation*}
Intuitively, we expect effective support set size to decrease when $ \alpha\to 1$. As the probability of success increases, higher values of the geometric random variable become less likely and the probability mass becomes more concentrated around the first few values. 
\end{example}
\begin{proof}
Effective support set size of a geometric distribution with probability of success $ \alpha>0$ satisfies: 
\begin{align*}
\RES[\PP(z)]&=1+\Big( \sum_{k=1}^\infty \sqrt{\Prob(k)}\,\sqrt{1-\Prob(k)}\Big)^2\\
&\le 1+\Big( \sum_{k=1}^\infty \sqrt{\Prob(k)}\Big)^2\\
&= 1+  \alpha\;\Big(\sum_{k=1}^\infty (1- \alpha)^{\frac{k-1}{2}}\,\Big)^2\\
&= 1+ \frac{ \alpha}{(1-\sqrt{1- \alpha})^2}
\end{align*}
\end{proof}

\subsection{Information and Events}
Our final key concept is that of \emph{information}. One of the most widely known results in information theory is that the amount of information delivered by an observation $Z$ is directly related to its uncertainty. In information theory, an observation $Z$ is said to deliver $\log{\frac{1}{\PP(Z)}}$ of information. Here, the use of $\log{\frac{1}{\PP(Z)}}$ is interpreted in terms of coding or \lq minimum description length'.  Such connection between information and uncertainty has found support in neuroscience \cite{knill2004bayesian,friston2010free}.

In the mathematical theory of learning presented in this paper, however, the amount of information delivered by an observation $Z$ is given by $1-\PP(Z)$. Informally speaking, the quantity of information delivered by an event, when measured in the context of learning, is the \lq\lq amount of change'' in our \lq\lq belief''upon knowing the event has occurred. Not surprisingly, information in the context of learning is also directly related to uncertainty.  

A precise definition of information is the following: 

\begin{definition}[Information]\label{inf_def}
Let $X$ and $Y$ be two random variables. The amount of information contained in an event $Y=y$ about  $X$ is given by:
\begin{equation}
\inff{X}{y} = \Dis{\PP(X)}{\PP(X\,|\,Y=y)} 
\end{equation}
In other words, the amount of information contained in an event $Y=y$ is the amount of change in our belief about the probability distribution of $X$ measured in the total variation sense.
\end{definition}

\begin{example}
For any random variable $X$ and any event $X=x$, we have: 
\begin{equation*}
\inff{X}{x} = \Dis{\PP(X)}{\PP(X\,|\,X=x)} = 1-\PP(x)
\end{equation*}
\end{example} 

It is perhaps worthwhile to note that the  measure of information in Definition \ref{inf_def} closely resembles the \lq\lq Bayesian Brain'', which is a popular model for how the brain might encode information coming from sensory systems. According to such model, the brain encodes its beliefs in the form of probabilities that are being continuously updated given new sensory information. For example, the depth of objects $D$ can have a prior distribution $\PP(D)$. Given a new retinal image $G$, the probability distribution of depth of an object changes to $\PP(D\,|\,G)$. The brain, then, acts upon the new posterior distribution \cite{knill2004bayesian,friston2010free,huang2008unified}. In this regard, a retinal image $G$ is only as useful as the impact it leaves on our prior beliefs, which is consistent with Definition \ref{inf_def}.

\begin{example}
The mutual affinity between two random variables $X$ and $Y$ is the expected amount of information one variable carries about the other:
\begin{equation}
\Inf{X}{Y}\;\; =\;\; \E_Y\;\inff{X}{Y}\;\;=\;\;\E_X\;\inff{Y}{X}
\end{equation} 
\end{example}

Finally, we note that for any two random variables $X$ and $Y$, the random variable $Y$ cannot carry more information about $X$ than it carries about itself, which is analogous to similar results in information theory.  More precisely, we have the following lemma: 
\begin{lemma} 
For any two random variables $X\in\X$ and $Y\in\Y$, and any event $Y=y$, we have: 
\begin{equation*}
\inff{X}{y} \le \inff{Y}{y}
\end{equation*} 
\end{lemma}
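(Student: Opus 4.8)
The plan is to unwind both sides to the definition of total variation distance and reduce the claim to a single pointwise inequality. By Definition \ref{inf_def}, the left-hand side is $\inff{X}{y} = \Dis{\PP(X)}{\PP(X\,|\,Y=y)}$, while the earlier example identifying $\inff{Y}{y}$ with $1-\PP(y)$ gives the right-hand side as $\inff{Y}{y} = 1 - \PP(Y=y)$. Since $\Dis{\cdot}{\cdot} = 1 - \Sim{\cdot}{\cdot}$ by Definition \ref{ProbSimDist}, the target inequality $\inff{X}{y} \le \inff{Y}{y}$ is therefore equivalent to the similarity lower bound $\sum_{x\in\X} \min\{\PP(X=x),\,\PP(X=x\,|\,Y=y)\} \ge \PP(Y=y)$. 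So the whole problem reduces to bounding this overlap from below by the prior probability of the observed event.

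To establish this, the key observation I would use is a pointwise comparison coming from marginalization: for every $x$, $\PP(X=x) = \sum_{y'}\PP(X=x,\,Y=y') \ge \PP(X=x,\,Y=y) = \PP(Y=y)\,\PP(X=x\,|\,Y=y)$. Combined with the trivial bound $\PP(Y=y)\,\PP(X=x\,|\,Y=y) \le \PP(X=x\,|\,Y=y)$, which holds because $\PP(Y=y)\le 1$, this shows that the single quantity $\PP(Y=y)\,\PP(X=x\,|\,Y=y)$ is a common lower bound for \emph{both} arguments of the minimum. Hence $\min\{\PP(X=x),\,\PP(X=x\,|\,Y=y)\} \ge \PP(Y=y)\,\PP(X=x\,|\,Y=y)$ holds for every $x$.

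Summing this pointwise inequality over all $x\in\X$ and factoring out $\PP(Y=y)$ then finishes the argument: since $\sum_{x}\PP(X=x\,|\,Y=y) = 1$, we obtain $\sum_x \min\{\PP(X=x),\,\PP(X=x\,|\,Y=y)\} \ge \PP(Y=y)$, which is precisely the similarity bound required above. I do not anticipate a serious obstacle here; the only mildly delicate point is recognizing that the correct quantity to bound both terms of the minimum from below is the joint expression $\PP(Y=y)\,\PP(X=x\,|\,Y=y)$ rather than either marginal alone. Once that choice is made, everything collapses to marginalizing a joint distribution and summing a conditional distribution to one, both of which are routine.
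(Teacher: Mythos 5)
Your proof is correct and is essentially the paper's own argument: both hinge on bounding each term $\min\{\PP(X=x),\,\PP(X=x\,|\,Y=y)\}$ from below by the joint probability $\PP(X=x,\,Y=y)=\PP(Y=y)\,\PP(X=x\,|\,Y=y)$ (via marginalization for one argument and $\PP(Y=y)\le 1$ for the other) and then summing over $x$. The paper phrases this as substituting $\PP(X=x,\,Y=y)$ into the first slot of the minimum and factoring, but the substance is identical.
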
 
\begin{proof}
Given an event $Y=y$, we have: 
\begin{align*}
\inff{X}{y} &= \Dis{\PP(X)}{\PP(X\,|\,Y=y)}\\ 
&=1-\sum_{x\in\X}\;\min\{\PP(X=x),\,\PP(X=x\,|\,Y=y)\}\\ 
&\le 1-\sum_{x\in\X}\;\min\{\PP(X=x,\,Y=y),\,\PP(X=x|Y=y)\}\\ 
&= 1-\sum_{x\in\X}\; \PP(X=x,\,Y=y)\; \min\{1,\,\frac{1}{\PP(Y=y)}\}\\ 
&= 1-\sum_{x\in\X}\;\PP(X=x,\,Y=y) \\ 
&= 1-\PP(Y=y)\\
&= \inff{Y}{y}
\end{align*}
The second inequality follows because $\PP(X)\ge \PP(X,\,Y)$.
\end{proof} 

\section{Main Results}
As stated earlier, our first goal is to be able to bound the difference between true and empirical risks of a given learning machine $\LL$. A formal  definition of $\LL$ was provided earlier in Section \ref{sec::notation}, which we interpreted by saying that $\LL$ receives a training set $S_m$ of $m$ i.i.d. examples $Z\sim\PP(z)$ and selects a hypothesis $H$ according to $\PP^{(m)}(h\,|\,S_m)$. Of course, $H$ can be selected deterministically, in which case $\PP^{(m)}(h\,|\,S_m)$ becomes a degenerate distribution. Once $H$ is selected, the entire learning process is concluded. 

Given an inferred hypothesis $H$, the same hypothesis can be used in multiple applications. For instance, if $H = \{c_j\}_{j=1,\ldots,k}\subseteq S_m$ is a set of $k$ prototypes that are selected out of $S_m$, then such prototypes can be used in clustering, and they can also be used regression or classification if the target concept $Y$ is one of the dimensions of $Z$. These different applications or uses of the \emph{same inferred hypothesis} give rise to \emph{different loss functions} $L_H(Z)$ that satisfy the Markov chain $S_m\to H\to L_H(Z)$. In regression, for instance, the loss function used might measure the mean-square error whereas a 0-1 misclassification loss might be used in classification. 

Nevertheless, the act of choosing a suitable loss function $L_H(Z)$ is \emph{outside} the learning process because the learning process is defined solely by how $H$ is being generated. Of course, the process by which $H$ is generated might be designed at the outset to optimize a specific loss function in mind, but this fact would already be encoded in the distribution $\PP^{(m)}(H\,|\,S_m)$. Henceforth, in order to bound the difference between true and  empirical risks of a given learning machine, we need a bound that holds simultaneously for \emph{all loss functions} that satisfy the Markov chain $S_m\to H\to L_H(Z)$, and we desire the bound to be as tight as possible. To achieve such objectives, we start with the following simple lemma. 
\begin{lemma}[Distance-Loss Lemma]\label{distanceLossLemma}
Suppose we have a random variable $Z\in\Z$ and two probability distributions $\Prob_1(z)$ and $\Prob_2(z)$ defined on $\Z$. Let $L(Z):\,\Z\to[0,\,1]$ be a loss function. Then: 
\begin{equation}
\Big| \E_{Z\sim\Prob_1(z)}L(Z)-\E_{Z\sim\Prob_2(z)}L(Z)\Big| \le \Dis{\Prob_1(z)}{\Prob_2(z)}
\end{equation}
In addition, there exists a loss function that achieves the bound. 
\end{lemma}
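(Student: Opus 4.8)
The plan is to prove the inequality by working directly with the signed difference, avoiding any appeal to concentration. Writing both expectations as sums, I would start from
\begin{equation*}
\E_{Z\sim\Prob_1(z)}L(Z)-\E_{Z\sim\Prob_2(z)}L(Z) = \sum_{z\in\Z}\big(\Prob_1(z)-\Prob_2(z)\big)\,L(z),
\end{equation*}
and then split the index set into the region where $\Prob_1(z)>\Prob_2(z)$ and the region where $\Prob_1(z)\le\Prob_2(z)$. The idea is that the loss $L(z)\in[0,1]$ can only reweight each term, and since it is nonnegative and bounded by $1$, the largest the sum can be is obtained by keeping the positive terms at full weight and killing the negative terms.

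For the upper bound, the key step is to observe that on the region $\{z:\Prob_1(z)>\Prob_2(z)\}$ each summand satisfies $0\le(\Prob_1(z)-\Prob_2(z))L(z)\le\Prob_1(z)-\Prob_2(z)$, while on the complementary region each summand is nonpositive. Dropping the nonpositive part gives
\begin{equation*}
\sum_{z\in\Z}\big(\Prob_1(z)-\Prob_2(z)\big)\,L(z)\;\le\;\sum_{z\in\Z}\max\{\Prob_1(z)-\Prob_2(z),\,0\}.
\end{equation*}
The next step is to recognize this right-hand side as exactly $\Dis{\Prob_1(z)}{\Prob_2(z)}$: using the identity $\min\{a,b\}=a-\max\{a-b,0\}$ together with the fact that both $\Prob_1$ and $\Prob_2$ sum to $1$, one gets $\sum_z\min\{\Prob_1(z),\Prob_2(z)\}=1-\sum_z\max\{\Prob_1(z)-\Prob_2(z),0\}$, so that $\sum_z\max\{\Prob_1(z)-\Prob_2(z),0\}=1-\sum_z\min\{\Prob_1(z),\Prob_2(z)\}=\Dis{\Prob_1(z)}{\Prob_2(z)}$ by Definition~\ref{ProbSimDist}. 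Running the symmetric argument (keeping the negative terms and discarding the positive ones) yields the matching lower bound $-\Dis{\Prob_1(z)}{\Prob_2(z)}$, and combining the two gives the claimed absolute-value bound.

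For achievability, I would exhibit the optimizer directly: take the indicator loss $L(z)=\1\{\Prob_1(z)>\Prob_2(z)\}$, which maps into $\{0,1\}\subseteq[0,1]$ and is therefore admissible. With this choice the sum collapses precisely onto the positive region, giving $\sum_{z\in\Z}(\Prob_1(z)-\Prob_2(z))L(z)=\sum_z\max\{\Prob_1(z)-\Prob_2(z),0\}=\Dis{\Prob_1(z)}{\Prob_2(z)}$, so the bound is attained with equality. I do not expect a genuine obstacle here, as the argument is elementary; the only points requiring care are the bookkeeping of signs when splitting the sum and the identification of the one-sided quantity $\sum_z\max\{\Prob_1(z)-\Prob_2(z),0\}$ with the symmetric overlap-based definition of $\Dis{\cdot}{\cdot}$, which is what licenses both the bound and its tightness.
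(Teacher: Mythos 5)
Your proposal is correct and follows essentially the same route as the paper: bound the signed difference by $\sum_{z}\max\{\Prob_1(z)-\Prob_2(z),0\}$, identify that quantity with $\Dis{\Prob_1(z)}{\Prob_2(z)}$, and attain equality with the indicator of the region where $\Prob_1$ dominates (the paper uses $\ge$ where you use $>$, which is immaterial since the boundary set contributes zero). The only cosmetic difference is that the paper phrases the first step so as to also cover losses that are randomized given $Z$, whereas you treat $L$ as deterministic, which is all the lemma statement requires.
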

\begin{proof}
We have: 
\begin{align*}
\E_{Z\sim\Prob_1(z)}& L(Z)\,-\,\E_{Z\sim\Prob_2(z)}\,L(Z) \\
&= \sum_{z\in\Z}\int_{u=0}^{u=1} u\cdot \Prob(L(Z)=u|\,Z=z)\,\big[\Prob_1(z)-\Prob_2(z)\big] \mathrm{d}u\\
&\le \sum_{z\in\Z} \,\max\{\Prob_1(z)\,-\,\Prob_2(z),\;0\}\\
&= \Dis{\Prob_1(z)}{\Prob_2(z)}
\end{align*}
Now, consider the following upper bound:
\begin{equation*}
\E_{Z\sim\PP_1(z)}\,L(Z) \le \E_{Z\sim\PP_2(z)}\,L(Z) + \Dis{\PP_1(z)}{\PP_2(z)}
\end{equation*} 
To show that the bound above is tight, we  note that the inequality holds with equality for the loss function:
\begin{align*}
L^\star(Z)=\1\big\{\PP_1(Z)\ge \PP_2(Z)\big\}
\end{align*} 
Since $L^\star(Z)$ satisfies conditions of the theorem, namely that we have $L^\star(Z):\,\Z\to[0,\,1]$, the upper bound is tight. Tightness of the lower bound is derived similarly.
\end{proof} 

\begin{corollary}\label{theoremSandwidth0} 
Suppose in a learning machine $\LL$, the hypothesis $H$ is itself a valid bounded loss function $L(Z):\Z\to[0,\,1]$. That is, $\HH^{(m)}\subseteq \{f(z)|\;\forall z\in\Z: 0\le f(z)\le 1\}$ for all $m\ge 1$. Let $Z_{trn}$ be a random variable whose value is  drawn uniformly at random out of $S_m$ with replacement. Also, define $R(\LL)$ and $R_{emp}(\LL)$ by Eq \ref{true_risk_learn} and Eq \ref{emp_risk_eq} respectively, where $L_H(Z)$ is now the hypothesis $H$. Then:
\begin{equation}
\big|R(\LL)-R_{emp}(\LL)\big| \le \Inf{Z_{trn}}{H} 
\end{equation}
\end{corollary}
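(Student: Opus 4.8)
The plan is to reduce the claim to a single application of the Distance-Loss Lemma (Lemma \ref{distanceLossLemma}) on the product space $\HH^{(m)}\times\Z$, by recognizing the difference $R(\LL)-R_{emp}(\LL)$ as the gap between an expectation taken under the \emph{joint} law of $(H,Z_{trn})$ and the same expectation taken under the \emph{product} of its marginals. The unifying device is the single bounded function $\tilde L(h,z)=h(z)$, which is admissible as a loss on $\HH^{(m)}\times\Z$ precisely because $\HH^{(m)}$ consists of functions valued in $[0,1]$.

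First I would rewrite the empirical risk. Drawing $Z_{trn}$ uniformly with replacement from $S_m$ gives $\frac{1}{m}\sum_{z_i\in S_m}h(z_i)=\E_{Z_{trn}\mid S_m}[h(Z_{trn})]$. Since $H$ and $Z_{trn}$ are conditionally independent given $S_m$ (the uniform draw of $Z_{trn}$ ignores how $H$ was produced), the joint law factors as $\PP(H,Z_{trn})=\E_{S_m}\big[\PP^{(m)}(H\mid S_m)\,\PP(Z_{trn}\mid S_m)\big]$, and averaging over $S_m$ yields $R_{emp}(\LL)=\sum_{h,z}\PP(H=h,Z_{trn}=z)\,h(z)$, i.e. $\E_{(H,Z_{trn})}[\tilde L(H,Z_{trn})]$ under the joint law.

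The key observation for the true risk is that the marginal of $Z_{trn}$ coincides with the observation distribution: because each $Z_i$ is i.i.d. $\PP(z)$ and $Z_{trn}$ is uniform over $S_m$, one has $\PP(Z_{trn}=z)=\frac{1}{m}\sum_i\PP(Z_i=z)=\PP(z)$. The fresh test draw $Z$ in $R(\LL)$ is independent of $H$, so $R(\LL)=\sum_h\PP(H=h)\sum_z\PP(z)\,h(z)$, and substituting $\PP(z)=\PP(Z_{trn}=z)$ turns this into $\sum_{h,z}\PP(H=h)\,\PP(Z_{trn}=z)\,h(z)$, which is $\E_{\PP(H)\PP(Z_{trn})}[\tilde L(H,Z_{trn})]$ under the product of the marginals. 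Thus both risks are the expectation of the \emph{same} integrand $\tilde L$, differing only in whether the joint or the product law is used.

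With these two identifications in hand I would set $\Prob_1=\PP(H)\PP(Z_{trn})$ and $\Prob_2=\PP(H,Z_{trn})$, both on $\HH^{(m)}\times\Z$, and apply Lemma \ref{distanceLossLemma} to $\tilde L$, obtaining $\big|R(\LL)-R_{emp}(\LL)\big|\le\Dis{\PP(H)\PP(Z_{trn})}{\PP(H,Z_{trn})}$. By the definition of mutual affinity (the ordering of the pair being immaterial for total variation) the right-hand side is exactly $\Inf{Z_{trn}}{H}$, which closes the argument. I expect the main obstacle to be bookkeeping rather than ideas: one must carefully justify that the test draw $Z$ is genuinely independent of $H$ (so the product form is correct) whereas $Z_{trn}$ is coupled to $H$ through $S_m$ (so the joint form is correct), and verify that the marginal-matching $\PP(Z_{trn})=\PP(z)$ is precisely what permits both risks to be expressed through the single integrand $\tilde L$.
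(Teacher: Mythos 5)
Your proposal is correct and is essentially the paper's own argument: both rest on the Distance-Loss Lemma together with the identification of $R_{emp}(\LL)$ as the expectation of $H(Z_{trn})$ under the coupled law of $(H,\,Z_{trn})$ (via conditional independence of $H$ and $Z_{trn}$ given $S_m$) and of $R(\LL)$ as the same expectation under the product of the marginals, using $\PP(Z_{trn}=z)=\PP(z)$. The only difference is packaging: the paper applies the lemma conditionally on each $H=h$ to the pair $\PP(z)$, $\PP(z\,|\,H=h)$ and then averages over $H$, whereas you apply it once on the product alphabet $\HH^{(m)}\times\Z$ with the single integrand $\tilde L(h,z)=h(z)$; the two bounds coincide by the identity $\Dis{\PP(H)\cdot\PP(Z_{trn})}{\PP(H,\,Z_{trn})}=\E_{H}\,\Dis{\PP(Z_{trn})}{\PP(Z_{trn}\,|\,H)}$ already built into the definition of mutual affinity.
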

\begin{proof}
As always, we write $\PP(z)$ to denote the probability distribution of observations $Z$. First, we have by Eq \ref{true_risk_learn}:  
\begin{align}\label{cor1_RL_eq}
R(\LL) \,=\, \E_{S_m,H} \E_{Z\sim\PP(z)}\,L_H(Z) = \E_H\;\E_{Z\sim\PP(z)}\,L(Z)
\end{align} 
Now, we note that the value of the expression $L(Z)$ inside the expectation depends, in fact, on the value of  \emph{two random variables}. The first random variable is the choice of the loss function $H=L(Z)$, since this is selected by the learning machine according to $S_m$. The second random variable is the observation $Z$. However, by definition of \emph{true risk}, $Z$ is drawn from its original distribution $\PP(z)$ independently of $L(Z)$.

By contrast, the hypothesis $H=L(Z)$ and $Z_{trn}$  are not independent of each other since both clearly depend on the training set $S_m$. The probability of observing the pair $(H,\,Z_{trn})$ is $\PP(H)\cdot \PP(Z_{trn}\,|\,H)$, where by marginalization: 
\begin{align}\label{eq::train_distribution} 
\nonumber\PP(Z_{trn}|H) &= \sum_{s_m\in\Z^m}\,\PP(S_m=s_m|H)\cdot \PP(Z_{trn}|S_m=s_m,H)\\
&=  \sum_{s_m\in\Z^m}\PP(S_m=s_m|H)\cdot \PP(Z_{trn}|S_m=s_m)
\end{align} 
The last line follows because $Z_{trn}$ and $H$ are conditionally independent of each other given $S_m$. To simplify notation, we will use $\PP(z\,|\,H)$ for the conditional distribution of training examples given the hypothesis $H$. That is: 
\begin{equation}
\PP(z\,|\,H)\; \cong\; \PP(Z_{trn}=z\,|\,H),
\end{equation}
So, we have: 
\begin{align}\label{cor1_Remp_eq} 
\nonumber R_{emp}(\LL) &= \E_{H}\,\E_{S_m|H}\,\E_{Z_{trn}\sim S_m}\;L(Z_{trn})\\ 
&= \E_H\, \E_{Z_{trn}\sim\PP(z|H)}\;L(Z_{trn})
\end{align} 
The first line is our original definition of empirical risk given earlier in Eq \ref{emp_risk_eq}, while the second line follows by Eq \ref{eq::train_distribution}. Now, we employ Lemma \ref{distanceLossLemma} to deduce that: 
\begin{align*}
R(\LL) - R_{emp}(\LL) &= \E_{H}\,\E_{Z\sim\PP(z)}L(Z)-\E_H\E_{Z\sim\PP(z|H)}L(Z) \\ 
&= \E_{H}\,\Big[\E_{Z\sim\PP(z)}\,L(Z)\,-\,\E_{Z\sim\PP(z|H)}\,L(Z)\Big] \\ 
&\le  \E_H\,\Dis{\PP(z)}{\PP(z|H)}\\
&=\Inf{Z_{trn}}{H}
\end{align*}
In the first line, we substituted Eq \ref{cor1_RL_eq} and \ref{cor1_Remp_eq}.  In the third line, we employed Lemma \ref{distanceLossLemma}. The last line uses the fact that the marginal distribution of $Z_{trn}$ is $\PP(z)$ by assumption whereas its posterior distribution given $H$ is $\PP(z|H)$ as stated in Eq \ref{eq::train_distribution}. 

In a similar manner, we see by using Lemma \ref{distanceLossLemma} that the following lower bound holds: 
\begin{equation*} 
R(\LL) - R_{emp}(\LL)  \ge -\Inf{Z_{trn}}{H} 
\end{equation*}
Both bounds imply statement of the corollary.
\end{proof} 

\begin{example} 
At one extreme, if the loss function $L(Z):\,\Z\to[0,\,1]$ is chosen independently of the training set $S_m$, then $\Inf{Z_{trn}}{H}=0$ and we obtain $R(\LL)= R_{emp}(\LL)$. Hence, lack of learning is perfectly captured in this model. At the other extreme, if $\Z$ is a region in $\mathbb{R}^n$ and $\PP(z)$ is a bounded probability density function over $\Z$, then lazy learners such as in 1-NN classification achieve $\Inf{Z_{trn}}{H}=1$. In the latter case, the empirical risk $R_{emp}(\LL)$, which can always be made identically zero for some loss function $L_H(Z)$, might carry no information whatsoever about the true risk $R(\LL)$\footnote{The loss function $L_H(Z)$ that achieves the bound for lazy learners, whose hypothesis $H$ is itself the entire training set $S_m$,  is $L_H(Z) = \1\{Z\notin H\}$. If the observation space is a region in the plane $\mathbb{R}^n$ and $\PP(z)$ is a bounded density function, then $R(\LL)=1$ whereas $R_{emp}(\LL) = 0$. }. 
\end{example}

The previous corollary is restricted to the case in which the loss function $L(Z)$ is itself the learned hypothesis $H$. This happens, for example, in the classification setting if we seek a separating hyperplane $(w,b)$ for some normal vector $w\in\mathbb{R}^n$ and offset $b\in\mathbb{R}$, and use the loss function $L_{w,b}(x,y) = \1\{y(w^T\,x-b)\le 0\}$ to measure risk. Because $(w,\,b)\leftrightarrow L_{w,b}(\cdot)$ is a one-to-one mapping, the act of choosing a hypothesis $H=(w,\,b)$ is \emph{equivalent} to the act of choosing a loss function $H=L_{w,\,b}(Z)$. If the training set $S_m$ influences the choice of the loss function, which in turn is used to measure risk, then the statement of Corollary \ref{theoremSandwidth0} holds.

Next, we generalize the previous result to any arbitrary learning machine as shown in the following theorem. 
\begin{theorem}\label{theoremSandwidth1} 
Given a learning machine $\LL$ that receives a training set $S_m$ of $m$ i.i.d. examples $Z\sim\PP(z)$ and produces a hypothesis $H\in\mathcal{H}^{(m)}$, let $L_H(Z):\,\Z\to[0,\,1]$ be any loss function that satisfies the Markov chain $S_m\to H\to L_H(Z)$. Also, let $Z_{trn}$ be a random variable whose value is drawn uniformly at random out of $S_m$ with replacement. Then, the true risk $R(\LL)$ and empirical risk $R_{emp}(\LL)$  of  the learning machine, defined in Eq \ref{true_risk_learn} and Eq \ref{emp_risk_eq} respectively, are related by: 
\begin{equation}\label{theorem_bound_H_eq}
\big|R(\LL) - R_{emp}(\LL)\big| \le \Inf{Z_{trn}}{H} 
\end{equation}
In addition, this is the tightest possible bound. 
\end{theorem}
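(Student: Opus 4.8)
The plan is to mirror the proof of Corollary~\ref{theoremSandwidth0} almost verbatim for the upper bound, with the one new ingredient that the Markov chain $S_m\to H\to L_H(Z)$ lets me treat $L_h(\cdot)$ as a \emph{fixed} bounded loss function $\Z\to[0,1]$ once $H=h$ is conditioned upon. Concretely, I would first rewrite the true risk as
\begin{equation*}
R(\LL) = \E_H\,\E_{Z\sim\PP(z)}\,L_H(Z),
\end{equation*}
using that in the definition of true risk $Z\sim\PP(z)$ is drawn independently of $H$. For the empirical risk I would rewrite $\frac{1}{m}\sum_{z_i\in S_m}L_H(z_i) = \E_{Z_{trn}\sim S_m}L_H(Z_{trn})$, reorder the expectations as $\E_H\,\E_{S_m|H}\,\E_{Z_{trn}\sim S_m}$, and invoke Eq~\ref{eq::train_distribution} together with the conditional independence of $Z_{trn}$ and $H$ given $S_m$ to collapse the inner two expectations into a single draw from the posterior $\PP(z\,|\,H)$, yielding
\begin{equation*}
R_{emp}(\LL) = \E_H\,\E_{Z\sim\PP(z\,|\,H)}\,L_H(Z).
\end{equation*}
Here the Markov chain is essential: it guarantees that, given $H=h$, the map $L_h$ carries no residual dependence on $S_m$, so the posterior $\PP(z\,|\,h)$ of a uniformly drawn training example is the only channel through which $S_m$ enters.

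With both risks in this factored form, the difference is $R(\LL)-R_{emp}(\LL)=\E_H\big[\E_{Z\sim\PP(z)}L_H(Z)-\E_{Z\sim\PP(z|H)}L_H(Z)\big]$. For each fixed value $H=h$ the inner bracket is exactly the object controlled by Lemma~\ref{distanceLossLemma} applied with $\Prob_1=\PP(z)$, $\Prob_2=\PP(z\,|\,h)$ and the bounded loss $L_h$, giving a per-$h$ bound of $\Dis{\PP(z)}{\PP(z\,|\,h)}$. Taking the expectation over $H$ and using $|\E X|\le\E|X|$ then delivers
\begin{equation*}
\big|R(\LL)-R_{emp}(\LL)\big|\le\E_H\,\Dis{\PP(z)}{\PP(z\,|\,H)}=\Inf{Z_{trn}}{H},
\end{equation*}
where the final equality is the definition of mutual affinity together with the fact that the marginal of $Z_{trn}$ is $\PP(z)$.

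For the tightness claim I would exhibit a single loss attaining equality. Lemma~\ref{distanceLossLemma} identifies the maximizer of the per-$h$ gap as $L^\star_h(Z)=\1\{\PP(Z)\ge\PP(Z\,|\,h)\}$; since this depends only on $h$ and $Z$, it is a legitimate bounded loss satisfying $S_m\to H\to L_H(Z)$. The point I would stress is that allowing $L_H$ to depend on $H$ is precisely the freedom needed to make every per-$h$ term attain its bound \emph{with the same sign}: for this choice each bracket equals the nonnegative quantity $\Dis{\PP(z)}{\PP(z\,|\,h)}$, so the expectation over $H$ equals $\Inf{Z_{trn}}{H}$ with no cancellation, and the absolute value is attained. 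Because a better universal bound would have to dominate this achieved value, $\Inf{Z_{trn}}{H}$ is the tightest possible.

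I expect the only genuine subtlety to be this sign-alignment step, that is, verifying that one may simultaneously orient all the pointwise extremal losses in the same direction while still respecting the Markov-chain constraint; everything else is a bookkeeping generalization of the Corollary. A secondary point worth checking is that conditioning on $H$ does not alter the posterior $\PP(z\,|\,H)$ governing $Z_{trn}$ when the loss is varied, which again follows from the Markov chain since that posterior is determined by the learning machine alone.
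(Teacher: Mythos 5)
Your proof is correct, but your route to the upper bound differs from the paper's. The paper does \emph{not} redo the risk decomposition for a general $H$: it invokes Corollary~\ref{theoremSandwidth0} with the loss function itself playing the role of the hypothesis, obtaining $|R(\LL)-R_{emp}(\LL)|\le\Inf{Z_{trn}}{L_H(Z)}$, and then upgrades this to $\Inf{Z_{trn}}{H}$ via the data-processing inequality applied to the chain $Z_{trn}\to H\to L_H(Z)$ --- a result it only proves later, in the section on partial ordering of learning machines. You instead condition directly on $H$, establish $R_{emp}(\LL)=\E_H\,\E_{Z\sim\PP(z|H)}L_H(Z)$ from the Markov-chain assumption and Eq~\ref{eq::train_distribution}, and apply Lemma~\ref{distanceLossLemma} per value of $h$ to the pair $\PP(z)$, $\PP(z|h)$. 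This is self-contained (no forward reference to the data-processing inequality) and makes explicit where the Markov chain is used, namely that given $H=h$ the loss carries no residual dependence on $S_m$, so the posterior $\PP(z|h)$ is the only channel through which the training set enters; the paper's route, by contrast, makes the generalization to arbitrary hypothesis spaces a one-line consequence of a structural inequality and cleanly separates ``the loss is what matters'' from ``$H$ is at least as informative as the loss.'' Your tightness argument --- the loss $L^\star_h(Z)=\1\{\PP(Z)\ge\PP(Z|h)\}$, with the observation that every per-$h$ gap is the nonnegative quantity $\Dis{\PP(z)}{\PP(z|h)}$ so no cancellation occurs under $\E_H$ --- is exactly the paper's, and indeed the paper's own tightness computation implicitly carries out the direct decomposition you use for the upper bound.
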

\begin{proof} 
We know from Corollary \ref{theoremSandwidth0} that for any loss function $L_H(Z)$ whose selection is influenced by the training set $S_m$, the following inequality holds:
\begin{equation*} 
\big|R(\LL) - R_{emp}(\LL)\big| \le \Inf{Z_{trn}}{L_H(Z)} 
\end{equation*} 
The quantity on the right-hand side is mutual affinity between  $Z_{trn}$ and the choice of the loss function $L_H(Z)$. Later in Section \ref{Sect::PartialOrder}, it will be shown using the data-processing inequality that $\Inf{A}{B}\ge \Inf{A}{C}$ whenever the Markov chain $A\to B \to C$ holds. Since $S_m\to H\to L_H(Z)$ implies $Z_{trn}\to H\to L_H(Z)$, we deduce the following bound:
\begin{equation*}
\Inf{Z_{trn}}{L_H(Z)} \;\le\; \Inf{Z_{trn}}{H},
\end{equation*} 
which holds whenever the Markov chain $S_m\to H\to L_H(Z)$ holds. This establishes the upper bound. To prove tightness, let us consider the following loss function:
\begin{equation}\label{LstarTheoremEq}
L_H^\star(z)=\1\Big\{ \Prob(Z_{trn}=z)\,\ge \Prob(Z_{trn}=z\,|\,H)\Big\}
\end{equation} 
In order to establish tightness of the bound in Eq \ref{theorem_bound_H_eq}, we first see that the loss function $L_H^\star(Z)$ satisfies both conditions of the theorem; namely that we have $\forall z\in\Z: 0\le L_H^\star(z)\le 1$ and $S_m\to H\to L_H^\star(Z)$. The second condition holds because any change to the original training set $S_m$ that does not alter the final hypothesis $H$ will not change the loss function $L_H^\star(Z)$. 

Now, it is immediate to observe that $L^\star_H(Z)$ achieves the bound. This is because if we measure true and empirical risks of $\LL$ using $L_H^\star(Z)$, we obtain: 
\begin{align*} 
R(\LL)&-R_{emp}(\LL)\\
&= \E_{H}\, \E_{Z\sim\PP(z)}\;L_H^\star(Z) - \E_{H}\, \E_{S_m|H}\E_{Z\sim S_m}\;L_H^\star(Z)\\ 
&= \E_H\;\Big[\E_{Z\sim\PP(z)}\;L_H^\star(Z) \;-\; \E_{Z\sim\PP(z|H)}\;L_H^\star(Z) \Big]\\ 
&= \E_H\;\Big[\E_{Z\sim\PP(z)}\;\1\{\PP(Z)\ge \PP(Z|H)\} \;\\
&\quad\quad\quad\quad-\; \E_{Z\sim\PP(z|H)}\;\1\{\PP(Z)\ge \PP(Z|H)\} \Big]\\ 
&= \E_H\;\Big[\sum_{z\in\Z}\; \big(\PP(z)-\PP(z|H)\big)\cdot\1\{\PP(z)\ge \PP(z|H)  \Big]\\ 
&= \E_H\;\Dis{\PP(z)}{\PP(z|H)}\\ 
&= \Inf{Z_{trn}}{H}
\end{align*}
Again, the second line follows from Eq \ref{eq::train_distribution} while the third line follows by construction of $L^\star_H(Z)$. If the inequality in the definition of $L^\star_H(Z)$ in Eq \ref{LstarTheoremEq} is reversed, we obtain: 
\begin{equation*} 
R(\LL)-R_{emp}(\LL) = - \Inf{Z_{trn}}{H}
\end{equation*} 
Hence, the bound is tight. 
\end{proof}

Of course, one does not usually know the original distribution $\PP(z)$ so computing $\Inf{Z_{trn}}{H}$ is not always possible. To resolve this issue, we introduce the notion of \emph{capacity} of learning machines. 
\begin{definition}[Learning Capacity]\label{capacityDefinition} 
Let $\LL$ be a learning machine  that receives a training set $S_m$ of $m$ i.i.d. examples $Z\sim\PP(z)$  and  produces a hypothesis $H\in\mathcal{H}^{(m)}$. Let $Z_{trn}$ be a random variable whose value is drawn uniformly at random out of $S_m$ with replacement. Then, \emph{capacity} of the learning machine $\LL$ is defined by: $C^{(m)}(\LL)=\sup_{\PP(z)} \Inf{Z_{trn}}{H}$, where the supremum is taken over all possible distributions of observations $Z$. 
\end{definition}
\begin{theorem}\label{theoremSandwidth2} 
Let $\LL$ be a learning machine  that receives a training set $S_m$ of $m$ i.i.d. examples $Z\sim\PP(z)$ for some unknown distribution $\PP(z)$ and  produces a hypothesis $H\in\mathcal{H}^{(m)}$. Also, let $L_H(Z):\,\Z\to[0,\,1]$ be any loss function  that satisfies the Markov chain $S_m\to H\to L_H(Z)$. In addition, let $Z_{trn}$ be a random variable whose value is drawn uniformly at random out of $S_m$ with replacement. Then, the true risk $R(\LL)$ and empirical risk $R_{emp}(\LL)$  of  the learning machine, defined in Eq \ref{true_risk_learn} and Eq \ref{emp_risk_eq} respectively, are related by: 
\begin{equation}
\big|R(\LL) - R_{emp}(\LL)\big| \le C^{(m)}(\LL),
\end{equation}
which holds for any distribution of observations $\PP(z)$. In addition, this is the tightest possible bound. 
\end{theorem}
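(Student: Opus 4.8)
The plan is to deduce the theorem directly from Theorem~\ref{theoremSandwidth1} together with Definition~\ref{capacityDefinition}, handling the upper bound and its tightness separately. No new estimates are needed; the whole argument is a matter of chaining the per-distribution result with the definition of the supremum.

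For the upper bound, I would fix an arbitrary distribution $\PP(z)$ of observations. Theorem~\ref{theoremSandwidth1} already guarantees that, for this distribution and any loss function satisfying the Markov chain $S_m\to H\to L_H(Z)$, we have $|R(\LL)-R_{emp}(\LL)|\le\Inf{Z_{trn}}{H}$. By Definition~\ref{capacityDefinition}, the capacity $C^{(m)}(\LL)=\sup_{\PP(z)}\Inf{Z_{trn}}{H}$ is the supremum of this right-hand side over all distributions of $Z$, so in particular $\Inf{Z_{trn}}{H}\le C^{(m)}(\LL)$ for the fixed $\PP(z)$ at hand. Chaining the two inequalities yields $|R(\LL)-R_{emp}(\LL)|\le C^{(m)}(\LL)$, and since $\PP(z)$ was arbitrary this holds for every distribution of observations, which is exactly what the theorem asserts for the distribution-independent bound.

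For tightness, the key observation is that $C^{(m)}(\LL)$ is claimed to be the best distribution-\emph{independent} constant, i.e.\ no smaller number can be substituted while the bound still holds for all $\PP(z)$ simultaneously. I would argue from the definition of the supremum: given any $\epsilon>0$, there exists a distribution $\PP^\star(z)$ for which $\Inf{Z_{trn}}{H}>C^{(m)}(\LL)-\epsilon$. Specializing Theorem~\ref{theoremSandwidth1} to $\PP^\star(z)$, its tightness clause supplies the loss function $L_H^\star(Z)$ of Eq~\ref{LstarTheoremEq}, which attains equality, so that $R(\LL)-R_{emp}(\LL)=\Inf{Z_{trn}}{H}>C^{(m)}(\LL)-\epsilon$. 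Hence the difference can be pushed arbitrarily close to $C^{(m)}(\LL)$, and no smaller bound can hold uniformly over all distributions.

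The step I expect to require the most care is the precise meaning of \lq\lq tightest possible'' in the presence of a supremum. If the supremum in the definition of capacity is attained at some $\PP^\star(z)$, then $|R(\LL)-R_{emp}(\LL)|=C^{(m)}(\LL)$ holds exactly for that distribution and loss function, and the claim is clean. If the supremum is not attained, the argument only yields equality asymptotically, so I would phrase tightness in the $\epsilon$-approximation sense above rather than as exact attainment. This is the only genuinely delicate point; everything else reduces to combining the already-established per-distribution bound with elementary properties of the supremum.
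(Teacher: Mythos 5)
Your proposal is correct and follows essentially the same route as the paper, which proves this theorem in one line by combining Theorem~\ref{theoremSandwidth1} with Definition~\ref{capacityDefinition}. Your more careful treatment of tightness---distinguishing exact attainment of the supremum from $\epsilon$-approximation when the supremum is not attained---is a welcome refinement that the paper glosses over.
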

\begin{proof} 
This follows by Definition \ref{capacityDefinition} and Theorem \ref{theoremSandwidth1}.
\end{proof}

 \begin{example} \label{toyProblemDetails}
Suppose observations $Z\in\{0,\,1\}$ are  Bernoulli trials with $\PP(Z=1)=\phi$, and that our learning machine $\LL$ summarizes the training set $S_m$ with the empirical average $H=\frac{1}{m}\,\sum_{i=1}^m\,Z_i$. Then, the capacity of this learning machine is asymptotically given by $C(\LL) \sim \frac{1}{\sqrt{2\,\pi\,m}}$. The proof is given in Appendix \ref{proof::example::bernoulli_problem}. Suppose, in addition, that a classifier predicts either 0 or 1 depending on which label occurs most often in the training set. In other words, we have the loss function: 
\begin{equation*} 
L_H(Z) = \1\{Z=1\}\cdot \1\{H<\frac{m}{2}\} \;+\;  \1\{Z=0\}\cdot \1\{H\ge\frac{m}{2}\} 
\end{equation*} 
Here, $L_H(Z)$ satisfies the conditions of Theorem \ref{theoremSandwidth2}. When $\phi=1/2$, Table \ref{toyProblemTrainTestError} shows simulations results for different values of $m$. As shown in table, the bound $|R(\LL)-R_{emp}(\LL)| \le C^{(m)}(\LL)$ holds with equality in this case \footnote{Precisely, the experiment run as follows. For each value of $m$, a total of $m$ training bits were generated. Each bit is either \lq 0' or \lq 1' with equal probability. The training error is the fraction of bits that are different from the majority. Expected test error is always $\frac{1}{2}$. This entire process was, then, repeated 1,000 times and averages are reported.}. 
\end{example}

\begin{table}[t]
\begin{center}
\begin{tabular}{|c|c|c|c|c|c|}
\hline
$m$ &10 & 25 & 50& 100 &  200\\
\hline
\;$R_{emp}(\LL)$\; &0.3780&0.4194&0.4426&0.4613&0.4712\\ 
\;$C^{(m)}(\LL)$\; &0.1230&0.0806 &0.0561 &0.0398 &0.0282\\  \hline
\end{tabular}
\end{center}
\caption{First row is empirical risk of the learning machine in Example \ref{toyProblemDetails}, which is estimated by averaging over 1,000 realizations of training error for randomly drawn training sets. The second row is theoretical learning capacity.}
\label{toyProblemTrainTestError}
\end{table}

\begin{example}\label{toyProblemExample2}
Suppose we decided to use the same learning machine $\LL$ in Example \ref{toyProblemDetails} but we would like to change our classifier. In this new classifier, we use empirical average on the training set and decide to either predict $y=1$ all the time or $y=0$ all the time but we choose to do so randomly according to the empirical distribution of the two labels \footnote{Precisely, we generate $m$ random bits where \lq 0' and \lq 1' are equally likely. In each training set, we compute the sum of observations $s$, and decide with probability $\frac{s}{m}$ to predict \lq 1' all the time.}. Let $L_H(Z)$ be the prediction error of this classifier. Clearly, $L_H(Z)$ satisfies the conditions of Theorem \ref{theoremSandwidth2}. Table \ref{toyProbRandClassifierTable} shows empirical risk and the predicted upper bound on true risk for different values of $m$. Here, both labels are assumed to be equally likely, which means that the true risk of $\LL$ is always  $\frac{1}{2}$. As shown in the table, the bound indeed holds, albeit the bound is slightly loose in this example. However, we knew earlier from  Example \ref{toyProblemDetails} that a loss function indeed exists, for which the upper bound holds with equality. 
\end{example}

\begin{table}
\begin{center}
\vspace{0.5cm}\begin{tabular}{|c|c|c|c|c|c|}
\hline
$m$ &10 & 25 & 50& 100 &  200\\
\hline
$R_{emp}(\LL)$ &0.4460&0.4811&0.4928&0.4947&0.4966\\
$R_{emp}(\LL)+C^{(m)}(\LL)$ &0.5690&0.5621&0.5488&0.5345&0.5248 \\\hline
\end{tabular}
\end{center}
\caption{In this problem, the empriical risk and theoretical upper bound on true risk, when using the loss function in Example \ref{toyProblemExample2}. The true risk is always $\frac{1}{2}$ for all values of $m$.} 
\label{toyProbRandClassifierTable}
\end{table}

\begin{example}[Randomized Learning Machine]\label{randomLearningExample}
Suppose our observation space is finite $|\Z|<\infty$. Given a training set $S_m$ of $m$ i.i.d. observations, let $N(z)$ denote the number of times $z\in\Z$ is observed in the training set. Suppose we have a learning machine $\LL$ whose final hypothesis $H$ is a \emph{single} value $H\in\Z$ that is selected randomly according to the empirical distribution $\Prob(H=z)=N(z)/m$. For example, if $\Z=\{1,\,2,\,3,\,4\}$ and the training set is $S_m=\{1,\,2,\,1,\,1,\,3,\,3\}$, then we have $\PP(H=1|S_m)=\frac{1}{2}$, $\PP(H=2|S_m)=\frac{1}{6}$, $\PP(H=3|S_m)=\frac{1}{3}$, and $\PP(H=4|S_m)=0$. The capacity of this learning machine is given by:
\begin{equation*}
C^{(m)}(\LL)= \frac{1}{m}\cdot \big(1-\frac{1}{|\Z|}\big)
\end{equation*}
\end{example} 
\begin{proof}
The proof is given in Appendix \ref{proofRandomLearningExample}.
\end{proof}

The objective of introducing Example \ref{randomLearningExample} is two-fold. First, when $|\Z|=2$, we see that this problem is quite similar to the previous Bernoulli problem in Example \ref{toyProblemDetails} except for the fact that $H$ is now a randomized summary statistic of the training set. The capacity of the two learning machines, however, are quite different. In the deterministic learning machine, we had $C^{(m)}(\LL) =O(1/\sqrt{m})$ whereas we have $C^{(m)}(\LL) =O(1/{m})$ in the randomized learning machine. Intuitively, we know that randomness should decrease capacity. 

Second, we note that our randomized learning machine in Example \ref{randomLearningExample} is related to the earlier randomized classifier in Example \ref{toyProblemExample2}.  This is because we can equivalently think of the latter classifier as a \emph{deterministic classifier} that receives a \emph{randomized hypothesis} $H$, instead of treating it as a randomized classifier that receives a deterministic hypothesis. With this new equivalent view, we note that the randomized learning machine in Example \ref{randomLearningExample} can be used to bound the difference between true risk and empirical risk in Table \ref{toyProbRandClassifierTable}. In fact, the bound now holds with equality. In other words, the difference between empirical and true risks in Table \ref{toyProbRandClassifierTable} is equal to the capacity of the learning machine in Example \ref{randomLearningExample}. Thus, \emph{for the same classifier, one might be able to find a better learning machine that yields tighter bounds}. Later, a more insightful interpretation of this fact will be provided when we show that learning machines  admit a partial order. 

Finally, we conclude this section with the following remark. Perhaps, one central goal of any  learning algorithm is to guarantee \emph{generalization}. That is, we would like to ensure that the empirical risk we estimate on a given training set is a valid approximation to the true risk we expect to obtain in the future. This is necessary because any learning algorithm has access to the empirical risk {only}, which can be minimized if the learning machine has sufficient capacity. The true risk, on the other hand, is inaccessible directly, and one can only minimize it by using a learning machine that generalizes well. 

\begin{definition}\label{GeneralizationDefinition}
A learning machine $\LL$ \emph{generalizes} if for all distributions of observations $\PP(z)$ and all  loss functions $L_H(Z):\,\Z\to[0,\,1]$ that satisfy the Markov chain $S_m\to H\to L_H(Z)$, we have $\lim_{m\to\infty}\,|R_{emp}(\LL)-R(\LL)| = 0$.
\end{definition}

\begin{definition}\label{finiteCapacityDefinition}
A learning machine $\LL$ has a finite capacity if $\lim_{m\to\infty}\,C^{(m)}(\LL)=0$. 
\end{definition}

It is important to distinguish learning machines with finite capacity from those with infinite capacity. This is partly due to the following result:

\begin{theorem}\label{generalizeIFFfiniteC}
A learning machine $\LL$ generalizes if and only if it has a finite capacity. 
\end{theorem}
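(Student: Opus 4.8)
The plan is to prove the two implications separately; the direction ``finite capacity $\Rightarrow$ generalizes'' is immediate, while the converse carries all the difficulty. For the easy direction, I would suppose $\lim_{m\to\infty}C^{(m)}(\LL)=0$ and fix any distribution $\PP(z)$ together with any loss function $L_H(Z):\Z\to[0,1]$ obeying $S_m\to H\to L_H(Z)$. Theorem \ref{theoremSandwidth2} gives $|R(\LL)-R_{emp}(\LL)|\le C^{(m)}(\LL)$ for every $m$, and since the right-hand side tends to $0$ independently of the chosen $\PP(z)$ and $L_H$, the gap $|R_{emp}(\LL)-R(\LL)|$ vanishes. As this holds for every admissible pair, $\LL$ generalizes in the sense of Definition \ref{GeneralizationDefinition}.

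For the converse I would argue by contraposition: assuming $\lim_{m\to\infty}C^{(m)}(\LL)\neq 0$, I want to exhibit a single distribution and loss function witnessing a non-vanishing risk gap. Failure of the limit yields $\epsilon>0$ and a subsequence $(m_k)$ with $C^{(m_k)}(\LL)=\sup_{\PP}\Inf{Z_{trn}}{H}>\epsilon$, so by definition of the supremum there is, for each $k$, a distribution $\PP_{m_k}(z)$ under which $\Inf{Z_{trn}}{H}>\epsilon$. The tightness clause of Theorem \ref{theoremSandwidth1} then converts this affinity into an exact risk gap: the loss function $L_H^\star(z)=\1\{\PP(Z_{trn}=z)\ge\PP(Z_{trn}=z\,|\,H)\}$ realizes $|R(\LL)-R_{emp}(\LL)|=\Inf{Z_{trn}}{H}>\epsilon$ along $(m_k)$ under $\PP_{m_k}$. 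Phrased differently, the tightness of Theorem \ref{theoremSandwidth1} shows that $C^{(m)}(\LL)$ is exactly the worst case of the risk gap over all admissible $\PP(z)$ and $L_H$, so the whole theorem reduces to interchanging $\sup_{\PP}$ with $\lim_m$.

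That interchange is the main obstacle, and the step I expect to require genuine work: Definition \ref{GeneralizationDefinition} asks for a single, $m$-independent distribution on which the gap fails to vanish, whereas the witnesses $\PP_{m_k}$ produced above may drift with $k$. When $\Z$ is finite I would close this by compactness of the probability simplex over $\Z$: pass to a further subsequence along which $\PP_{m_k}\to\PP^\star$, and argue that $\PP^\star$ inherits a bounded-below gap, contradicting generalization. The delicate point is that the map $\PP\mapsto\Inf{Z_{trn}}{H}$ must be controlled uniformly in $m$ near $\PP^\star$ --- ordinary continuity at fixed $m$ is not enough, since the product measures $\PP^{\otimes m}$ amplify small perturbations as $m$ grows --- so I anticipate needing a modulus of continuity independent of $m$ (or a Dini-type monotonicity argument) to transfer the gap from $\PP_{m_k}$ to $\PP^\star$. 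For countably infinite $\Z$ the simplex is no longer compact and mass can escape to infinity; here I would expect to need an additional tightness hypothesis on the family $\{\PP_{m_k}\}$, or a bound on the effective support set size $\RES[\cdot]$, to recover a limiting witness, and this is where I judge the argument to be most fragile.
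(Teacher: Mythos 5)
Your forward direction (finite capacity $\Rightarrow$ generalizes) is exactly the paper's, via Theorem~\ref{theoremSandwidth2}. For the converse, you should know that the paper's entire proof is the sentence ``the bound is achievable for some $\PP(z)$ and some $L_H(Z)$, thus in order for $\LL$ to generalize we must have $\lim_m C^{(m)}=0$'' --- i.e., it performs precisely the $\sup_{\PP}$/$\lim_m$ interchange that you single out as the crux, without comment and without the compactness machinery you propose. So your diagnosis of where the difficulty sits is more careful than the paper's own argument; the paper implicitly treats ``generalizes'' as if Definition~\ref{GeneralizationDefinition} quantified over $\PP(z)$ \emph{inside} the limit, whereas the definition as written fixes $\PP(z)$ first and then lets $m\to\infty$.

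Your suspicion that the countably infinite case is where this breaks is correct, and in fact the gap cannot be closed there: the two conditions are genuinely inequivalent. Take $\LL$ to be the lazy learner on a countably infinite $\Z$, so that $\PP(Z_{trn}=z\,|\,H)$ is the empirical distribution $\hat\PP_m$ of $S_m$ and $\Inf{Z_{trn}}{H}=\E_{S_m}\Dis{\PP}{\hat\PP_m}$. For every \emph{fixed} discrete $\PP$ this tends to $0$, so $\LL$ generalizes in the sense of Definition~\ref{GeneralizationDefinition}; but taking $\PP$ uniform on $N=m^2$ points gives $\Dis{\PP}{\hat\PP_m}\ge 1-m/N=1-1/m$, whence $C^{(m)}(\LL)=1$ for all $m$ and $\LL$ does not have finite capacity. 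The witnesses $\PP_{m_k}$ really do ``escape to infinity,'' exactly as you feared, and no tightness hypothesis is available to stop them. Conversely, when $\Z$ is finite your compactness argument is unnecessary, because Theorem~\ref{superLearningMachineLemma_1} already forces $C^{(m)}(\LL)\le\sqrt{(|\Z|-1)/(2\pi m)}\to 0$ for \emph{every} learning machine, so both sides of the equivalence hold vacuously. The net effect is that the honest content of the theorem is your easy direction plus the observation that $C^{(m)}$ is the exact worst case of the risk gap; the ``only if'' direction as literally stated requires either restricting to finite $\Z$ or redefining generalization uniformly over $\PP(z)$, and neither your proposal nor the paper's proof supplies a valid argument for it in the stated generality.
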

\begin{proof} 
This follows from the fact that $\big|R(\LL)-R_{emp}(\LL)\big| \le C^{(m)}(\LL)$ is achievable for some distribution $\PP(z)$ and some loss function $L_H(Z)$ that satisfies the conditions of Definition \ref{GeneralizationDefinition}. Thus, in order for $\LL$ to generalize, we must have $\lim_{m\to\infty}\,C^{(m)}=0$. The converse also holds.
\end{proof}

Luckily,  most learning machines of interest have finite capacities. In fact, any learning machine with a countable observation space $\Z$ has a finite capacity. This follows from the Asymptotic Equipartition Property (AEP) in information theory. In simple terms, for any distribution of observations $\PP(z)$, the sequence $(Z_1,\,Z_2,\,\ldots, Z_m)$ becomes progressively closer to a sequence that is \emph{unique up to permutation}, and this happens as $m\to\infty$. This unique sequence is the one implied by the law of large numbers; i.e. for each $z\in\Z$, the fraction of times $z$ appears in the sequence is given by its probability $\Prob(z)$ \cite{cover2012elements}.  Because the learning machines we consider in this paper are always invariant to permutations of training examples, knowledge of the hypothesis $H$ typically yields little information about the training set as $m\to\infty$ because all sufficiently large training sets are nearly identical at the specified limit up to permutation. Such conclusion will be established more formally in Theorem \ref{superLearningMachineLemma_1}, in which we provide the rate of convergence. 

\section{Interpreting Learning Capacity} 
In this section, we provide several interpretations to learning capacity. To do this, we first note that any learning process is influenced by three key components:
\begin{enumerate} 
\item Observations $Z$ including their space $\Z$ and probability distribution $\PP(z)$. 
\item The inference process $\PP^{(m)}(H\,|\,S_m)$. 
\item The hypothesis space $\HH^{(m)}$. 
\end{enumerate} 

All three components influence the learning capacity. In particular, if we impose restrictions on any of these three components, we effectively limit the learning capacity. In this section, we explore such possibilities. 

First, we show that learning capacity is indeed a reasonable measure of quantifying how \lq\lq much'' has been learned out of the training set. In particular, we show using the data-processing inequality that the \lq\lq more'' we learn, the larger the learning capacity is. Second,  we show that having \emph{algorithmic stability} in the inference process is equivalent to having a finite learning capacity. Third, we show that if observations are restricted to a countable space $\Z$ with a finite effective support set size, then all learning machines have finite capacity. Finally, we explore connections between learning capacity and the hypothesis space  $\HH^{(m)}$. One, perhaps not quite surprising, result is that all learning machines have finite capacity if size of the hypothesis space is finite. The latter result, proved via information theoretic inequalities, is analogous to well-known results that have been established in the past using the Probably Approximately Correct (PAC) model. 

\subsection{Partial Ordering of Learning Machines}\label{Sect::PartialOrder}
Earlier in Example \ref{toyProblemDetails} and Example \ref{randomLearningExample}, we looked into two learning machines that were very similar to each other, yet with drastically different capacities. Let us briefly look into those two learning machines again. In both machines, observations are Bernoulli random variables $Z\in\{0,\,1\}$. The difference between the two learning machines lies in their method of computing their  hypothesis $H$: 
\begin{enumerate}
\item The first learning machine $\LL_{det}$ computes the empirical average of samples $H_{det} = \frac{1}{m}\,\sum_{i=1}^m\,Z_i$. 
\item The second learning machine $\LL_{rnd}$ also computes the empirical average of samples $H_{det}=\frac{1}{m}\,\sum_{i=1}^m\,Z_i$. However, its final hypothesis is $H_{rnd}\in\{0,\,1\}$, where $H_{rnd}$ is a Bernoulli random variable with probability of success $H_{det}$. 
\end{enumerate}
We noted that $C(\LL_{rnd})\le C(\LL_{det})$. Why should the latter result hold? In this section, we show that the latter inequality holds because we have the Markov chain $S_m\to H_{det}\to H_{rnd}$. In other words, because $H_{det}$ is necessarily \lq\lq more informative'' than $H_{rnd}$, the learning machine $\LL_{det}$ has a larger learning capacity than $\LL_{rnd}$. To establish this result, we begin with the following lemma:

\begin{lemma}\label{dataProcessLem1} 
Let $A\in\mathcal{A}$, $B\in\mathcal{B}$, and $C\in\mathcal{C}$ be three random variables. If $A\to B\to C$ forms a Markov chain, i.e.  $\Prob(C\,|\,A,\, B)=\Prob(C\,|\,B)$, then: 
\begin{equation*}
\Inf{A}{(B,\,C)} = \Inf{A}{B}
\end{equation*}
In other words, because $C$ is conditionally independent of $A$ given $B$, adding $C$ to $B$ does not create any additional affinity with  $A$. 
\end{lemma}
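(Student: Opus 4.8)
The plan is to expand both quantities using the expectation form of mutual affinity that conditions on $A$, namely $\Inf{A}{(B,\,C)} = \E_A\,\Dis{\PP(B,C)}{\PP(B,C\,|\,A)}$, and then to exploit the Markov property to collapse the pair $(B,\,C)$ back down to $B$ alone. First I would fix an outcome $A=a$ and write out the total variation distance in its similarity (minimum) form, as in Definition~\ref{ProbSimDist}:
\begin{equation*}
\Dis{\PP(B,C)}{\PP(B,C\,|\,a)} = 1 - \sum_{b,c}\,\min\{\PP(b,c),\;\PP(b,c\,|\,a)\}.
\end{equation*}

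The key step is to factor the two joint distributions sitting inside the minimum. Because $A\to B\to C$ is a Markov chain, we have $\PP(c\,|\,b,a)=\PP(c\,|\,b)$, hence $\PP(b,c)=\PP(b)\,\PP(c\,|\,b)$ and $\PP(b,c\,|\,a)=\PP(b\,|\,a)\,\PP(c\,|\,b)$; crucially the factor $\PP(c\,|\,b)$ is common to both terms and is independent of $a$. Since $\PP(c\,|\,b)\ge 0$, I can pull it outside the minimum: $\min\{\PP(b)\PP(c|b),\,\PP(b|a)\PP(c|b)\} = \PP(c|b)\,\min\{\PP(b),\,\PP(b|a)\}$. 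This factoring is the heart of the argument and the one place where the Markov hypothesis is genuinely invoked.

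Summing first over $c$ then annihilates the conditional, since $\sum_c \PP(c\,|\,b)=1$, so the double sum reduces to $\sum_b \min\{\PP(b),\,\PP(b\,|\,a)\} = \Sim{\PP(B)}{\PP(B\,|\,a)}$. Consequently $\Dis{\PP(B,C)}{\PP(B,C\,|\,a)} = \Dis{\PP(B)}{\PP(B\,|\,a)}$ for every outcome $a$, and taking the expectation over $A$ yields $\Inf{A}{(B,\,C)} = \E_A\,\Dis{\PP(B)}{\PP(B\,|\,a)} = \Inf{A}{B}$, which is precisely the claim.

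I do not expect any genuine obstacle here; the only subtlety worth flagging is that extracting $\PP(c\,|\,b)$ from the minimum relies on its nonnegativity, which holds for probability mass functions, so the manipulation is valid on the countable alphabets assumed throughout the paper. (By contrast, the analogous factoring across $a$ inside the min would fail, which is exactly why the inequality in the forthcoming data-processing result, rather than an equality, is the most one can hope for when $B$ is replaced by $C$.)
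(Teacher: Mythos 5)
Your proof is correct and follows essentially the same route as the paper's: both factor the joint distributions via the Markov property so that the common nonnegative term $\PP(c\,|\,b)$ can be pulled out of the minimum and then summed away over $c$. The only cosmetic difference is that you work with the form $\E_A\,\Dis{\PP(B,C)}{\PP(B,C\,|\,A)}$ while the paper works directly with $\Dis{\PP(A)\PP(B,C)}{\PP(A,B,C)}$; these are equivalent by the definition of mutual affinity.
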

\begin{proof} 
We have:
\begin{align*}
\mathcal{I_P}(&A,\,(B,\,C)) \\
&= 1-\sum\,\min\big\{\Prob(A)\cdot \Prob(B,\,C),\;\; \Prob(A,\,B,\,C) \big\}\\
&= 1- \sum\, \min\big\{\Prob(A)\cdot \Prob(B)\cdot \Prob(C\,|\,B),\\
&\quad\quad\quad\quad\quad\quad\quad\quad \;\; \Prob(A, B)\cdot \Prob(C\,|\,A,\,B) \big\}\\
&= 1- \sum\,\Prob(C\,|\,B)\; \min\big\{\Prob(A)\cdot \Prob(B), \;\; \Prob(A,\, B) \big\}\\
&= 1- \sum\, \min\big\{\Prob(A)\cdot \Prob(B), \;\; \Prob(A,\, B) \big\}\\
&= \Dis{\Prob(A)\cdot \Prob(B)}{\Prob(A,\,B)}\\
&= \Inf{A}{B}\end{align*}
\end{proof} 

\begin{lemma}[Information Can't Hurt]\label{infoCantHurt} 
For any random variables  $A\in\mathcal{A}$, $B\in\mathcal{B}$, and $C\in\mathcal{C}$, we have: 
\begin{equation*}
\Inf{A}{(B,C)} \ge \Inf{A}{B}
\end{equation*}
In other words, adding $C$ to $B$ cannot reduce affinity with $A$.
\end{lemma}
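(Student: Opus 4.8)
The plan is to expand both mutual affinities through the definition of similarity and reduce the claim to a single inequality between similarities. Recall that $\Inf{X_1}{X_2} = \Dis{\Prob(X_1)\Prob(X_2)}{\Prob(X_1,X_2)} = 1 - \Sim{\Prob(X_1)\Prob(X_2)}{\Prob(X_1,X_2)}$, and that similarity is a sum of pointwise minima. Writing both $\Inf{A}{(B,C)}$ and $\Inf{A}{B}$ in this form, the $1-(\cdot)$ cancels and the target inequality $\Inf{A}{(B,C)}\ge\Inf{A}{B}$ becomes the \emph{reversed} similarity inequality
\begin{equation*}
\sum_{a,b,c}\min\big\{\Prob(a)\Prob(b,c),\;\Prob(a,b,c)\big\} \;\le\; \sum_{a,b}\min\big\{\Prob(a)\Prob(b),\;\Prob(a,b)\big\}.
\end{equation*}
So it suffices to establish this last inequality, and the rest of the argument is devoted to it.

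The key step — really the only nonroutine ingredient — is the subadditivity of the minimum under summation: for any nonnegative sequences $(x_c)$ and $(y_c)$ one has $\sum_c \min\{x_c,y_c\}\le\min\{\sum_c x_c,\;\sum_c y_c\}$, simply because $\sum_c\min\{x_c,y_c\}$ is bounded above by each of $\sum_c x_c$ and $\sum_c y_c$ separately. I would apply this for each fixed pair $(a,b)$ with $x_c = \Prob(a)\Prob(b,c)$ and $y_c = \Prob(a,b,c)$, summing only over the index $c$, to obtain
\begin{equation*}
\sum_{c}\min\big\{\Prob(a)\Prob(b,c),\;\Prob(a,b,c)\big\} \;\le\; \min\Big\{\sum_c\Prob(a)\Prob(b,c),\;\sum_c\Prob(a,b,c)\Big\}.
\end{equation*}

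Finally I would collapse the two inner sums by marginalization: since $\sum_c\Prob(b,c)=\Prob(b)$ we get $\sum_c\Prob(a)\Prob(b,c)=\Prob(a)\Prob(b)$, and $\sum_c\Prob(a,b,c)=\Prob(a,b)$. The right-hand side above therefore equals $\min\{\Prob(a)\Prob(b),\Prob(a,b)\}$, and summing over all $(a,b)$ recovers exactly the similarity inequality displayed in the first paragraph, which completes the proof. I do not expect a genuine obstacle here; the entire content sits in recognizing that similarity decomposes as a sum of minima, so that the subadditive behavior of $\min$ under the $c$-marginalization does all the work. It is worth noting the structural parallel with Lemma \ref{dataProcessLem1}: there the Markov assumption $A\to B\to C$ lets one factor $\Prob(c\,|\,b)$ out of the inner sum and makes the corresponding step an \emph{equality}, whereas here, with no conditional-independence assumption, subadditivity turns the same step into the inequality in the stated direction.
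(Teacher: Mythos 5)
Your proposal is correct and follows essentially the same route as the paper: both reduce the claim to the pointwise inequality $\sum_{c}\min\{x_c,y_c\}\le\min\{\sum_c x_c,\sum_c y_c\}$ applied over the $c$-marginalization, followed by $\sum_c\Prob(b,c)=\Prob(b)$ and $\sum_c\Prob(a,b,c)=\Prob(a,b)$. The only cosmetic difference is that the paper first factors $\Prob(A=a)$ out of the minimum before invoking this inequality, whereas you keep it inside; the argument is identical in substance.
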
 
\begin{proof}
We have by definition:
\begin{align*}
&\mathcal{I_P}(A,\,(B,\,C)) \\
&= 1-\sum \min\big\{\Prob(A)\cdot \Prob(B,\,C),\;\;\Prob(A,\,B,\,C) \big\}\\
&= 1-\sum_{a\in\mathcal{A}}\,\Prob(A=a)\\
&\quad\quad\sum_{b\in\mathcal{B},\,c\in\mathcal{C}}\min\big\{\Prob(B=b,C=c),\,\Prob(B=b,C=c|A=a) \big\}\\
\end{align*}
However, the minimum of the sums is always larger than the sum of minimums. That is: 
\begin{equation*}
\min\big\{\sum_i \alpha_i, \; \sum_i\beta_i\big\} \ge \sum_i \min\{\alpha_i,\,\beta_i\}
\end{equation*}
Using marginalization $\PP(x)=\sum_y\,\PP(x,\,y)$ and the above inequality, we obtain:
\begin{align*} 
\mathcal{I_P}(&A,\,(B,\,C)) \\
&= 1-\sum_{a\in\mathcal{A}}\,\Prob(A=a)\\
&\quad\quad\quad\sum_{b\in\mathcal{B},\,c\in\mathcal{C}}\min\big\{\Prob(B=b,C=c),\Prob(B=b,C=c|A=a) \big\}\\
&\ge 1-\sum_{a\in\mathcal{A},b\in\mathcal{B}} \min\{\Prob(A=a)\,\Prob(B=b), \;\;\Prob(A=a,\,B=b)\} \\
&=\Inf{A}{B}
\end{align*}
\end{proof}

Lemma \ref{infoCantHurt} is the analog to the \lq\lq Information can't hurt'' inequality in information theory. In the context of learning, it simply states that adding more summary statistics about the training set cannot decrease mutual affinity. Thus, \emph{the \lq\lq more'' the summary statistics we use, the larger the learning capacity is}. Using both lemmas, we arrive at the important data-processing inequality. 

\begin{lemma}[Data Processing Inequality]\label{dataProcessIneqLemma}
Suppose we have the Markov chain: 
\begin{equation*}
Z_{trn}\to H_1\to H_2,
\end{equation*}
where $Z_{trn}\sim\PP(z)$. Then, the following inequality holds for any distributions of observations $\PP(z)$: 
\begin{equation*}
\Inf{Z_{trn}}{H_1} \ge \Inf{Z_{trn}}{H_2}
\end{equation*}
\end{lemma}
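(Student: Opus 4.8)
The plan is to derive this directly from the two preceding lemmas, Lemma~\ref{dataProcessLem1} and Lemma~\ref{infoCantHurt}, by sandwiching the joint affinity $\Inf{Z_{trn}}{(H_1,\,H_2)}$ between the two quantities in the claim. I would set $A = Z_{trn}$, $B = H_1$, and $C = H_2$, so that the hypothesis $Z_{trn}\to H_1\to H_2$ is exactly the Markov chain $A\to B\to C$ required by Lemma~\ref{dataProcessLem1}.

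First I would apply Lemma~\ref{dataProcessLem1} in this orientation. Since $C = H_2$ is conditionally independent of $A = Z_{trn}$ given $B = H_1$, adjoining $H_2$ to $H_1$ creates no additional affinity with $Z_{trn}$, which gives the \emph{equality} $\Inf{Z_{trn}}{(H_1,\,H_2)} = \Inf{Z_{trn}}{H_1}$. This is the step that consumes the Markov assumption.

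Next I would invoke Lemma~\ref{infoCantHurt}, which holds for arbitrary random variables with no Markov structure. Because mutual affinity depends only on the joint law of the conditioning pair, its two components may be reordered freely, so $\Inf{Z_{trn}}{(H_1,\,H_2)} = \Inf{Z_{trn}}{(H_2,\,H_1)}$; applying ``information can't hurt'' to adjoin $H_1$ to $H_2$ then yields $\Inf{Z_{trn}}{(H_1,\,H_2)} \ge \Inf{Z_{trn}}{H_2}$. Chaining this inequality with the equality from the first step produces $\Inf{Z_{trn}}{H_1} = \Inf{Z_{trn}}{(H_1,\,H_2)} \ge \Inf{Z_{trn}}{H_2}$, which is precisely the asserted data-processing inequality.

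The argument is essentially bookkeeping once both lemmas are available, so I do not expect a substantive obstacle. The only point demanding care is keeping the two orientations straight, namely using the Markov structure \emph{only} for the equality (Lemma~\ref{dataProcessLem1}) while relying on the unconditional bound (Lemma~\ref{infoCantHurt}) for the inequality, together with the observation that the pair $(H_1,\,H_2)$ inside $\Inf{\cdot}{\cdot}$ can be permuted without changing its value.
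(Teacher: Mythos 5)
Your proposal is correct and follows exactly the paper's own two-line argument: Lemma~\ref{dataProcessLem1} gives the equality $\Inf{Z_{trn}}{(H_1,\,H_2)} = \Inf{Z_{trn}}{H_1}$ and Lemma~\ref{infoCantHurt} gives $\Inf{Z_{trn}}{(H_1,\,H_2)} \ge \Inf{Z_{trn}}{H_2}$. Your explicit remark about permuting the pair inside the affinity is a harmless bit of extra care that the paper leaves implicit.
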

\begin{proof}
We have by Lemma \ref{dataProcessLem1} and Lemma \ref{infoCantHurt}: 
\begin{equation*}
\Inf{Z_{trn}}{H_1} = \Inf{Z_{trn}}{(H_1,\,H_2)}\ge \Inf{Z_{trn}}{H_2}
\end{equation*}
\end{proof}

The statement that manipulation hurts information has manifested in many contexts. In information theory, manipulation leads to loss of mutual information, and hence decreases the capacity of communication channels \cite{cover2012elements}. In Bayesian decision theory, manipulation leads to loss of information and hence reduces the optimal Bayes rate in classification \cite{devroye1996probabilistic}. In our context, manipulation leads to loss of information, and hence decreases the capacity of learning machines. Decreasing the capacity of learning machines, however, is not always disadvantageous since it can mitigate overfitting. 

As suggested earlier, the data processing inequality yields an insightful notion of \emph{partial ordering} of learning machines.

\begin{definition}[Subsets and Supersets]
Suppose we have two learning machines: $\LL_1$ that produces a hypothesis $H_1$ and $\LL_2$ that produces $H_2$ over the same observation space $\Z$. We say that $\LL_2$ is a subset of $\LL_1$, denoted $\LL_2\subseteq \LL_1$, if  behavior of the learning machine $\LL_2$ can be simulated completely by $\LL_1$. Mathematically, we have $\LL_2\subseteq\LL_1$ if the Markov chain $S_m\to H_1\to H_2$ holds. 
\end{definition} 
\begin{figure}[t]
		\centering
		\includegraphics[width=9cm,height=5cm]{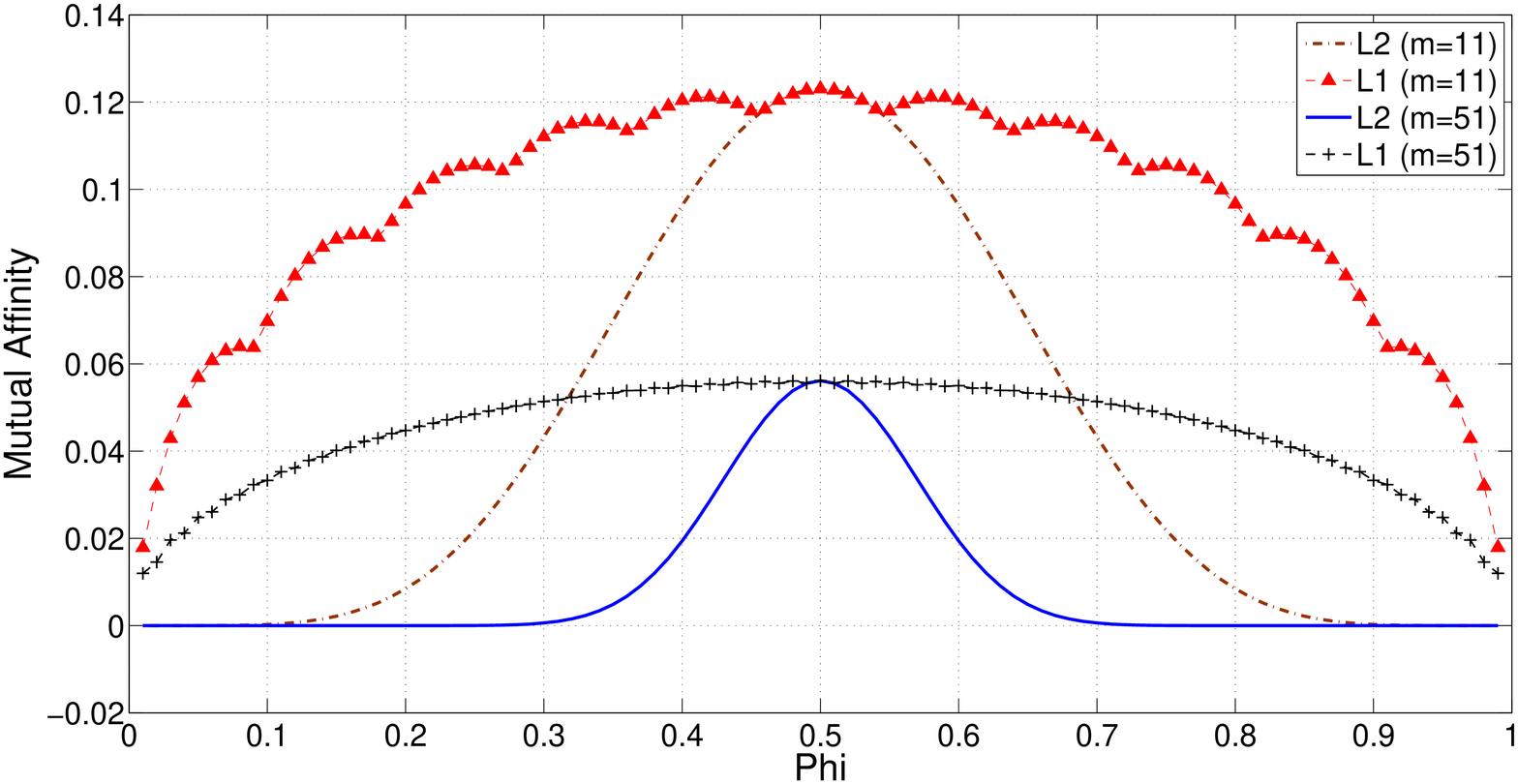}
		\caption{An illustration to a partial order of learning machines. Here, $\LL_2\subseteq \LL_1$, where the two machines are defined in Example \ref{partialOrderExample}. As shown in the figure, the inequality $\Inf{Z_{trn}}{H_2}\le \Inf{Z_{trn}}{H_1}$ always hold.}
		\label{DistanceThetaExmFig}
\end{figure}

\begin{theorem}\label{supersetCapacityTheorem}
If for two learning machines $\LL_1$ and $\LL_2$ we have $\LL_2\subseteq\LL_1$, then: 
\begin{equation*}
C^{(m)}(\LL_2)\;\le\; C^{(m)}(\LL_1),\quad\quad \text{for all } m\ge 1
\end{equation*}
\end{theorem}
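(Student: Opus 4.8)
The plan is to reduce the statement to the Data Processing Inequality (Lemma~\ref{dataProcessIneqLemma}), which already controls a \emph{single} fixed distribution of observations, and then to lift the resulting pointwise inequality to the supremum that defines capacity.

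First I would fix an arbitrary distribution of observations $\PP(z)$ and hold it throughout. The hypothesis $\LL_2\subseteq\LL_1$ gives the Markov chain $S_m\to H_1\to H_2$. Since $Z_{trn}$ is obtained by sampling uniformly from $S_m$, it is a (randomized) function of $S_m$ alone, so conditioning on $H_1$ renders $H_2$ independent of $S_m$ and hence of $Z_{trn}$; that is, $S_m\to H_1\to H_2$ implies $Z_{trn}\to H_1\to H_2$, exactly the reduction already invoked in the proof of Theorem~\ref{theoremSandwidth1}. This places us squarely in the hypothesis of Lemma~\ref{dataProcessIneqLemma}, which yields
\begin{equation*}
\Inf{Z_{trn}}{H_1}\;\ge\;\Inf{Z_{trn}}{H_2}
\end{equation*}
for this particular $\PP(z)$.

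Next I would take the supremum over all observation distributions. The crucial point is that the inequality above holds for \emph{every} $\PP(z)$ simultaneously, on the \emph{same} common probability space carrying $S_m$, $H_1$, and $H_2$. By the elementary monotonicity of the supremum---if $g(\PP)\le f(\PP)$ for all $\PP$ then $\sup_{\PP} g(\PP)\le\sup_{\PP} f(\PP)$---I conclude
\begin{equation*}
C^{(m)}(\LL_2)=\sup_{\PP(z)}\Inf{Z_{trn}}{H_2}\;\le\;\sup_{\PP(z)}\Inf{Z_{trn}}{H_1}=C^{(m)}(\LL_1),
\end{equation*}
which is the claim. Note that the two suprema need not be attained by the same maximizing distribution; monotonicity does not require this.

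The argument is short because the substantive work is already packaged in the data-processing inequality. The only point demanding care---and the step I expect to be the main (if modest) obstacle---is the transfer of the conditional-independence structure from $S_m$ to $Z_{trn}$: one must check that replacing the full training set by a single uniformly drawn example preserves the Markov property $\,\cdot\to H_1\to H_2$. This is where the standing assumption that $Z_{trn}$ depends on the data only through $S_m$ (and is conditionally independent of the hypotheses given $S_m$) is used, and it is precisely the hypothesis under which Lemma~\ref{dataProcessIneqLemma} was stated.
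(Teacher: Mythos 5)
Your proposal is correct and follows exactly the route the paper takes: the paper's own proof is a one-line appeal to the data-processing inequality (Lemma~\ref{dataProcessIneqLemma}) together with the definition of capacity as a supremum over $\PP(z)$, and you supply precisely that argument, with the added (welcome) care of checking that $S_m\to H_1\to H_2$ passes to $Z_{trn}\to H_1\to H_2$. No gaps.
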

\begin{proof}
By the data processing inequality (Lemma \ref{dataProcessIneqLemma}) and by definition of capacity (Definition \ref{capacityDefinition}). 
\end{proof} 

\begin{example}\label{partialOrderExample}
Returning again to our earlier example, where observations $Z\in\{0,\,1\}$ are Bernoulli trials with probability of success  $\phi$, suppose one learning machine $\LL_1$ computes the empirical average of observations $H_1=\frac{1}{m}\,\sum_{i=1}^m\,Z_i$. Also, suppose a second learning machine $\LL_2$ only reports the label that occurs most often in the training set. That is, $H_2=\1\{H_1\ge \frac{m}{2}\}$. Clearly, $\LL_2\subseteq \LL_1$. Figure \ref{DistanceThetaExmFig} shows mutual affinity $\Inf{Z_{trn}}{H_1}$ and $\Inf{Z_{trn}}{H_2}$ for $m=11$ and $m=51$ and different values of $\phi\in(0,1)$. As shown in the figure, the inequality $\Inf{Z_{trn}}{H_2}\le \Inf{Z_{trn}}{H_1}$ always hold. 
\end{example}
\begin{proof}
The proof is given in Appendix \ref{partialOrderExampleProof}. 
\end{proof} 

\subsection{Learning Capacity and Stability}
Algorithmic stability analysis was popularized in the last decade in learning theory. Because stability is a property of the learning machine itself rather than the hypothesis space, it is widely applicable to a broad class of machine learning algorithms. For this reason, stability has been proposed as a key condition for learnability and generalization \cite{bousquet2002stability,poggio2004general,shalev2010learnability}. In this section, we show that an appropriate notion of stability is indeed \emph{both necessary and sufficient} for generalization of learning machines. 

The notion of capacity or mutual affinity of a learning machine $\LL$ is intimately tied to the notion of stability. First, let us recall the inequality:
\begin{equation*}
\big|R(\LL)-R_{emp}(\LL)\big| \le  \Inf{Z_{trn}}{H}
\end{equation*}
For a given distribution $\PP(z)$, this is \emph{the tightest} possible bound because there exists some loss functions $L_H(Z):\Z\to[0,\,1]$ that satisfy the Markov chain $S_m\to H\to L_H(Z)$ and achieve the bound. To see how this is tied to the notion of stability, define: 
\begin{align}
\nonumber s^{(m)}(\LL) &= \E_{Z_{trn}}\; \Sim{\Prob^{(m)}(H)}{\Prob^{(m)}(H\,|\,Z_{trn})}\\
&=\sum_{z\in\Z}\,\Prob(Z_{trn}=z)\,\cdot\,\Sim{\Prob^{(m)}(H)}{\Prob^{(m)}(H\,|\,Z_{trn}=z)}
\end{align}
Here, $s^{(m)}(\LL)$ is a measure of how insensitive the learning machine $\LL$ is to a single training example, on average, when we have $m$ examples in the training set. Specifically, $\Prob^{(m)}(H\,|\,Z_{trn})$ is the probability distribution of the hypothesis $H$ given a single fixed training example $Z_{trn}$ and expectation is taken over all possible single training examples. If the learning machine is stable, then $\Prob^{(m)}(H)$ should be very close to $\Prob^{(m)}(H\,|\,Z_{trn})$ in distance and $s^{(m)}(\LL)\approx 1$. In this case, a single training example $Z_{trn}$ does not make a \lq\lq big'' difference to the distribution of the inferred hypothesis $H$. If we define stability of the learning machine $\LL$ using: 
\begin{equation}\label{stabilityEq}
\SSS^{(m)}(\LL) =   \inf_{\PP(z)}\;s^{(m)}(\LL),
\end{equation}
where the infimum is taken over all possible distributions of observations $Z$, then $\SSS^{(m)}(\LL)$ characterizes a distribution-free stability of $\LL$. However, we have by definition:
\begin{equation*}
C^{(m)}(\LL) = 1-\SSS^{(m)}(\LL)
\end{equation*} 
Thus, the capacity of a learning machine is inversely related to its algorithmic stability. Because in order for a learning machine to generalize (see Definition \ref{GeneralizationDefinition}) it is both necessary and sufficient that it has a finite capacity, we conclude that stability as defined in Eq \ref{stabilityEq} is also both necessary and sufficient. We emphasize again that such result holds for any learning machine including unsupervised learning algorithms. 

\begin{definition}\label{stabilityDefinition}
A learning machine $\LL$ is \emph{stable} if $ \lim_{m\to\infty} \SSS^{(m)}(\LL) = 1$
\end{definition} 
In other words, a learning machine is stable if the impact of a single observation becomes more and more negligible as size of the training set increases. 

\begin{theorem}\label{stab_theorem}
A learning machine $\LL$ generalizes if and only if it is algorithmically stable. 
\end{theorem}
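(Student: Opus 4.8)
The plan is to reduce the statement to two ingredients already available in the excerpt: the equivalence between generalization and finite capacity (Theorem \ref{generalizeIFFfiniteC}), and the pointwise identity $C^{(m)}(\LL) = 1 - \SSS^{(m)}(\LL)$ that ties capacity to the distribution-free stability $\SSS^{(m)}(\LL)$ defined in Eq \ref{stabilityEq}. Once both are in hand, stability and generalization become the same limiting condition written in complementary variables, so the theorem follows by chaining equivalences.

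First I would establish the identity $C^{(m)}(\LL) = 1 - \SSS^{(m)}(\LL)$ for each fixed $m$. Starting from the conditional form of mutual affinity, $\Inf{Z_{trn}}{H} = \E_{Z_{trn}}\,\Dis{\Prob^{(m)}(H)}{\Prob^{(m)}(H\,|\,Z_{trn})}$, and substituting $\Dis{\cdot}{\cdot} = 1 - \Sim{\cdot}{\cdot}$ from Definition \ref{ProbSimDist}, I would rewrite this as $\Inf{Z_{trn}}{H} = 1 - s^{(m)}(\LL)$, where $s^{(m)}(\LL)$ is exactly the expected similarity introduced in the text. Taking the supremum over all observation distributions $\Prob(z)$ and using $\sup_{\Prob(z)}(1 - s^{(m)}(\LL)) = 1 - \inf_{\Prob(z)} s^{(m)}(\LL)$ then yields $C^{(m)}(\LL) = 1 - \SSS^{(m)}(\LL)$ by Definition \ref{capacityDefinition} and Eq \ref{stabilityEq}.

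Next I would pass to the limit. Since $C^{(m)}(\LL) = 1 - \SSS^{(m)}(\LL)$ holds for every $m$, we have $\lim_{m\to\infty} C^{(m)}(\LL) = 0$ if and only if $\lim_{m\to\infty} \SSS^{(m)}(\LL) = 1$. The left condition is precisely finite capacity (Definition \ref{finiteCapacityDefinition}) and the right condition is precisely algorithmic stability (Definition \ref{stabilityDefinition}), so $\LL$ is stable if and only if it has finite capacity. Finally I would invoke Theorem \ref{generalizeIFFfiniteC}, which asserts that $\LL$ generalizes if and only if it has finite capacity, and chain the two equivalences to conclude that $\LL$ generalizes if and only if it is stable.

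The only step that warrants any care is the supremum-to-infimum conversion used to obtain the capacity–stability identity, which depends on the clean fact that distance and similarity sum to one; everything else is direct substitution. I therefore expect no genuine obstacle beyond confirming that the two limiting definitions are exact complements of one another.
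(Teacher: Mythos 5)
Your proof is correct and follows essentially the same route as the paper: the identity $C^{(m)}(\LL) = 1-\SSS^{(m)}(\LL)$ that you derive from $\Dis{\cdot}{\cdot}=1-\Sim{\cdot}{\cdot}$ and the conditional form of mutual affinity is exactly the relation the paper states (without detail) in the text immediately preceding the theorem, and the paper's one-line proof then chains it with Theorem \ref{generalizeIFFfiniteC} just as you do. You have merely made explicit the supremum-to-infimum step that the paper asserts ``by definition,'' which is a faithful elaboration rather than a different argument.
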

\begin{proof} 
By Definition \ref{stabilityDefinition} and Theorem \ref{generalizeIFFfiniteC}. 
\end{proof} 

\begin{corollary}\label{stab_h_hbar_cor}
Suppose a learning machine $\LL$ is supplied with a training set $S_m$ that consists of $m$ i.i.d. training examples $Z\sim\PP(z)$, and let $H$ be the inferred hypothesis. Then, let  $S_m'$ be a new training set with $m$ i.i.d. observations drawn also from $\PP(z)$, and let $H'$ be the new hypothesis. Then: 
\begin{equation}
s^{(m)}(\LL)\ge \PP(H=H')
\end{equation}
\end{corollary}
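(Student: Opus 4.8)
The plan is to unfold both sides into sums over the hypothesis space and reduce the claim to a single elementary inequality applied term by term. Writing $p(h) = \PP(H=h)$ and $p(h\mid z) = \PP^{(m)}(H=h \mid Z_{trn}=z)$, I would first expand the definition of $s^{(m)}(\LL)$ together with the definition of similarity (Definition~\ref{ProbSimDist}) to obtain
\begin{equation*}
s^{(m)}(\LL) = \sum_{z\in\Z}\PP(z)\sum_{h}\min\{p(h),\,p(h\mid z)\}.
\end{equation*}
On the other side, since $S_m$ and $S_m'$ are independent draws processed by the same learning machine, $H$ and $H'$ are independent and identically distributed with common law $p(\cdot)$, so $\PP(H=H') = \sum_h p(h)^2$. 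Thus the corollary reduces to showing $\sum_{z}\PP(z)\sum_h \min\{p(h),\,p(h\mid z)\} \ge \sum_h p(h)^2$.

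The second ingredient is the law of total probability. Because $Z_{trn}$ is a uniformly chosen training example and each example is drawn from $\PP(z)$, its marginal is $\PP(z)$, and marginalizing over it recovers $p(h) = \sum_z \PP(z)\, p(h\mid z)$; that is, $p(h)$ is the $\PP(z)$-average of the conditionals $p(h\mid z)$. This identity is what will convert the weighted sum on the left into the square on the right.

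The crux --- essentially the only real step --- is the pointwise inequality $\min\{a,b\}\ge ab$ valid for all $a,b\in[0,1]$ (if $b\ge a$ then $\min=a\ge ab$ since $b\le 1$; if $b<a$ then $\min=b\ge ab$ since $a\le 1$). Applying it with $a=p(h)$ and $b=p(h\mid z)$, both of which lie in $[0,1]$, gives $\min\{p(h),\,p(h\mid z)\}\ge p(h)\,p(h\mid z)$. Summing against $\PP(z)$ and invoking the total-probability identity yields, for each fixed $h$,
\begin{equation*}
\sum_z \PP(z)\,\min\{p(h),\,p(h\mid z)\} \ge p(h)\sum_z \PP(z)\,p(h\mid z) = p(h)^2,
\end{equation*}
after which I would sum over $h\in\HH^{(m)}$ to conclude $s^{(m)}(\LL)\ge \sum_h p(h)^2 = \PP(H=H')$. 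I do not expect a serious obstacle here: the argument is bookkeeping around the inequality $\min\{a,b\}\ge ab$, and the only points demanding care are the identification $\PP(H=H')=\sum_h p(h)^2$, which rests on $H$ and $H'$ being i.i.d., and the marginalization identity for $p(h)$, which rests on $Z_{trn}$ having marginal $\PP(z)$.
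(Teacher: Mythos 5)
Your proof is correct and follows essentially the same route as the paper's: both reduce the claim to the inequality $\Sim{\PP^{(m)}(H)}{\PP^{(m)}(H\,|\,Z_{trn})}\ge\sum_h \PP(H=h)\,\PP(H=h\,|\,Z_{trn})$, then marginalize over $Z_{trn}$ to identify the right-hand side with $\PP(H=H')$. The only cosmetic difference is that you justify that inequality directly via the pointwise bound $\min\{a,b\}\ge ab$ on $[0,1]$, whereas the paper cites it as the $T=1$ case of the product expansion in Lemma~\ref{SeriesExpansionLemma} (Eq.~\ref{distUpperBound}); your derivation is more self-contained but proves the same fact.
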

\begin{proof}
Using the infinite product representation of the distance function $\Dis{\PP_1}{\PP_2}$ in Eq \ref{distUpperBound}, we know that:
\begin{equation}\label{loo_Eq_bound}
\Sim{\PP_1(z)}{\PP_2(z)} \;\ge\; \sum_{z\in\Z}\;\PP_1(z)\cdot \PP_2(z)
\end{equation} 
Now, we write by definition of stability and by Eq \ref{loo_Eq_bound}: 
\begin{align*} 
&\;s^{(m)}(\LL) = \E_{Z_{trn}} \Sim{\PP^{(m)}(H)}{\PP^{(m)}(H|Z_{trn})}\\ 
&\;\ge \E_{Z_{trn}} \sum_{h\in\HH^{(m)}} \PP^{(m)}(H=h)\cdot \PP^{(m)}(H=h|Z_{trn})\\ 
&\;= \E_{Z_{trn}}\sum_{h\in\HH^{(m)}} \E_{Z_{trn}'}\PP^{(m)}(H=h|Z_{trn}')\, \PP^{(m)}(H=h|Z_{trn})\\
&\;= \E_{Z_{trn}}\E_{Z_{trn}'}\sum_{h\in\HH^{(m)}} \PP^{(m)}(H=h|Z_{trn}')\, \PP^{(m)}(H=h|Z_{trn})\\ 
\end{align*} 
The last line states the following. First, we fix a single training example $Z_{trn}$ and draw all remaining $m-1$ training examples i.i.d. from $\PP(z)$ and let $H$ be the inferred hypothesis. After that, we perform a second trial, in which we fix a new training example $Z_{trn}'$ and let $H'$ be the new hypothesis. Then: 
\begin{align*} 
s^{(m)}(\LL) &= \E_{Z_{trn},\,Z_{trn}'}\; \PP(H=H'\;|\;Z_{trn},\,Z_{trn}')\\ 
&= \PP(H=H'),
\end{align*} 
where the second line follows by marginalization. 
\end{proof}

\begin{example}
If we return again to the Bernoulli problem, where we have binary observations $Z\in\{0,\,1\}$ that are drawn from a Bernoulli distribution with parameter $\Prob(Z=1)=\phi$. Let $\LL$ be the learning machine that produces the label the occurs most often in the training set. It was shown earlier in Example \ref{partialOrderExample} that capacity of this learning machine is given by (see Appendix \ref{partialOrderExampleProof}): 
\begin{equation*} 
C(\LL) \sim \frac{1}{\sqrt{2\,\pi\,m}}
\end{equation*} 
Capacity (maximum mutual affinity) occurs when $\phi=\frac{1}{2}$. For arbitrary values of $\phi$, mutual affinity is, in general, quite involved and is given in Appendix \ref{partialOrderExampleProof}. Instead of dealing with the exact expression of mutual affinity, we would like to draw qualitative results using stability analysis. Using Corollary \ref{stab_h_hbar_cor}, we note that if we draw two training sets $S_m$ and $S_m'$ independently, the probability we obtain different hypotheses is: 
\begin{align*}
\PP(H \neq H') &= 2\,\PP(H=0)\cdot \PP(H=1)\\
&=2\, \Big(\sum_{k=0}^{m/2}{m\choose k}\phi^k(1-\phi)^{m-k}\Big) \\
&\quad\quad \Big(\sum_{k=m/2+1}^{m}{m\choose k}\phi^k(1-\phi)^{m-k}\Big)\\ 
&\le 2\,\min\big\{\PP(H=0),\,\PP(H=1) \big\}
\end{align*}
Writing $\epsilon = |\frac{1}{2}-\phi|$ and using both Corollary \ref{stab_h_hbar_cor} and Hoeffding's inequality \cite{hoeffding1963}: 
\begin{align*}
\Inf{Z_{trn}}{H}&\le \PP(H\neq H')\\
&\le 2\,\exp\big\{-2\,m\,\epsilon^2 \big\}\\
&= 2\,\exp\big\{-2m\,\big|\frac{1}{2}-\phi\big|^2\big\}
\end{align*}
Clearly, this is a simple method of establishing that mutual affinity decreases exponentially fast when $\phi\neq \frac{1}{2}$, i.e. the two classes are not equally likely.
\end{example}

\subsection{Learning Capacity and Observations}
In the previous section, we looked into different interpretations of how the learning process influences the learning capacity. In this section, we look into observations $Z$ and the role of the effective support set size of $\PP(z)$. 

Earlier, it was stated that learning machines could be partially ordered, where $\LL_2\subseteq \LL_1$ implied that $C(\LL_2)\le C(\LL_1)$. In particular, if $\LL^\star$ is a lazy learner, then $C(\LL)\le C(\LL^\star)$ for all learning machines $\LL$. To reiterate, a lazy learner returns the training set itself as a hypothesis $H$. Next, we show that a lazy learner in a countable observation space actually has a finite capacity. 
\begin{theorem}[The Square-Root Law] \label{superLearningMachineLemma_1}
If observations $Z\in\Z$ are drawn i.i.d. from a probability distribution $\PP(z)$ with finite effective support set size, then the following asymptotic bound on capacity holds for any learning machine $\LL$ \footnote{Here, we have an additional term $o(1/\sqrt{m})$. However, such term is negligible and the bound becomes arbitrarily tight in ratio as $m\to\infty$.}:
\begin{equation}\label{SupersetLearningTheoremEq}
C^{(m)}(\LL) \le  \sqrt{\frac{\textnormal\RES\,[\PP(z)]-1}{2\,\pi\,m}} \le \sqrt{\frac{|\Z|-1}{2\,\pi m}}
\end{equation}
In addition, the lazy learner $\LL^\star$ achieves the bound.
\end{theorem}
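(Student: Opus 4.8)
The plan is to reduce the claim for an arbitrary learning machine to the single case of the lazy learner, and then to evaluate the lazy learner's mutual affinity asymptotically through the central limit theorem. First I would observe that in any learning machine the hypothesis $H$ is generated from $S_m$ alone, so that $Z_{trn}$ and $H$ are conditionally independent given $S_m$; that is, the Markov chain $Z_{trn}\to S_m\to H$ holds. Writing $H^\star=S_m$ for the lazy learner $\LL^\star$, Lemma \ref{dataProcessIneqLemma} (the data-processing inequality) then gives $\Inf{Z_{trn}}{H}\le\Inf{Z_{trn}}{S_m}$ for every distribution $\PP(z)$. Hence it suffices to bound $\Inf{Z_{trn}}{S_m}$, and this simultaneously proves that the lazy learner attains the bound, since for $\LL^\star$ we have $H=S_m$ and equality throughout.

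Next I would rewrite the affinity of the lazy learner as an expected distance between the true and empirical distributions. Using the conditional form of mutual affinity, $\Inf{Z_{trn}}{S_m}=\E_{S_m}\,\Dis{\PP(z)}{\PP(z\,|\,S_m)}$, and noting that, given $S_m$, the posterior of $Z_{trn}$ is the empirical distribution $N(z)/m$ (where $N(z)$ counts occurrences of $z$ in $S_m$), this becomes $\frac{1}{2}\sum_{z}\E\,\big|N(z)/m-\PP(z)\big|$, where each count satisfies $N(z)\sim\text{Binomial}(m,\PP(z))$. I would then evaluate each term: the mean absolute deviation of a binomial obeys $\E\,|N(z)-m\PP(z)|\sim\sqrt{2/\pi}\cdot\sqrt{m\,\PP(z)(1-\PP(z))}$ as $m\to\infty$, which follows from $(N(z)-m\PP(z))/\sqrt{m\PP(z)(1-\PP(z))}\to\mathcal{N}(0,1)$ together with $\E|\mathcal{N}(0,1)|=\sqrt{2/\pi}$. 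Summing over $z$ yields $\Inf{Z_{trn}}{S_m}\sim\frac{1}{\sqrt{2\pi m}}\sum_z\sqrt{\PP(z)(1-\PP(z))}=\sqrt{(\RES[\PP(z)]-1)/(2\pi m)}$ by Definition \ref{resolutionDefinition}.

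To obtain the rightmost bound I would then take the supremum over $\PP(z)$ required by Definition \ref{capacityDefinition} and invoke the estimate $\RES[\PP(z)]\le|\Z|$, which follows from Cauchy--Schwarz: $\big(\sum_z\sqrt{\PP(z)(1-\PP(z))}\big)^2\le|\Z|\sum_z\PP(z)(1-\PP(z))\le|\Z|-1$, since $\sum_z\PP(z)(1-\PP(z))=1-\sum_z\PP(z)^2\le 1-1/|\Z|$. This gives $C^{(m)}(\LL)\le\sqrt{(|\Z|-1)/(2\pi m)}$ uniformly in $\PP(z)$.

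The main obstacle is the step where I pass from the per-symbol central limit asymptotics to the sum over a possibly countably infinite alphabet, which is exactly where the finite effective support set hypothesis is needed and where the $o(1/\sqrt{m})$ correction arises. The termwise Gaussian approximation degrades when $m\,\PP(z)$ is small, so the interchange of limit and summation cannot be taken for granted. I would control this by splitting the alphabet into a finite head, on which the central limit theorem applies uniformly, and a tail whose total contribution is dominated by $\sum_z\sqrt{\PP(z)(1-\PP(z))}=\sqrt{\RES[\PP(z)]-1}$. The assumed finiteness of $\RES[\PP(z)]$ guarantees this tail is summable and can be made arbitrarily small, justifying the exchange by dominated convergence and accounting for the asserted negligible error term.
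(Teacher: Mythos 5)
Your proposal is correct and follows essentially the same route as the paper: reduce an arbitrary machine to the lazy learner via the Markov chain $Z_{trn}\to S_m\to H$ and the data-processing inequality, write the lazy learner's affinity as $\frac{1}{2}\sum_z\E|N(z)/m-\PP(z)|$ with binomial counts, and evaluate the mean absolute deviations asymptotically (the paper uses the exact De Moivre mean-deviation formula plus Stirling where you invoke the central limit theorem, yielding the same $\sqrt{2m\,\PP(z)(1-\PP(z))/\pi}$ asymptotics). Your explicit head/tail argument for countably infinite alphabets and the Cauchy--Schwarz bound $\RES[\PP(z)]\le|\Z|$ address points the paper only asserts, so if anything your write-up is the more careful of the two.
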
 
\begin{proof}
This can be proved by deducing capacity of the lazy learner $\LL^\star$ that is described earlier. The detailed proof is given in Appendix \ref{superLearningMachineProof_1}.
\end{proof}

Intuitively, Theorem \ref{superLearningMachineLemma_1} states that in order to have good generalization that holds for any possible learning machine, the average number of training examples per each possible observation must be sufficiently large. For \emph{multiclass} classification problems where $Z=(X,\,Y)\in\X\times\Y$, we have the following corollary: 
\begin{corollary} \label{universalCapacityClassificationCor}
Suppose observations consist of attributes plus labels, i.e. $Z=(X,\,Y)\in\X\times\Y$, where $|\X|\times |\Y|<\infty$, and our learning machine produces a hypothesis $H$, which is a classifier that predicts class label $Y$ given $X$. Let $L_H(Y,\,X)=\1\{Y\neq H(X)\}$ be the misclassification error. Also, let $\HH=\{h(x):\,\X\to\Y\}$ be the set of all possible hypotheses (classifiers). Then, the difference between empirical risk and true risk for any possible learning machine $\LL$ is asymptotically bounded by: 
\begin{align}\label{univCapCor_Eq1}
\Big|R(\LL)-R_{emp}(\LL)\Big| &\le \sqrt{\frac{|\X|\times |\Y|-1}{2\,\pi\,m}}\\
\label{univCapCor_Eq2}&= \sqrt{\frac{|\Y|\times  \log_{|\Y|}{|\HH|}-1}{2\,\pi\,m}}
\end{align}
\end{corollary}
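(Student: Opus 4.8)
The plan is to derive both the inequality in Eq~\ref{univCapCor_Eq1} and the identity in Eq~\ref{univCapCor_Eq2} directly from the Square-Root Law (Theorem~\ref{superLearningMachineLemma_1}) together with an elementary counting argument for the cardinality of the hypothesis space; no new machinery is required.

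First I would fix the observation space. Each observation is a feature--label pair $Z=(X,\,Y)$, so the relevant alphabet is $\Z=\X\times\Y$, which is finite with $|\Z|=|\X|\times|\Y|$. I would then check that the $0$--$1$ misclassification loss $L_H(Y,\,X)=\1\{Y\neq H(X)\}$ satisfies the hypotheses of Theorem~\ref{theoremSandwidth2}: it maps into $\{0,1\}\subseteq[0,1]$, and since it depends only on the inferred classifier $H$ and the observation $Z$, it respects the Markov chain $S_m\to H\to L_H(Z)$. Hence Theorem~\ref{theoremSandwidth2} applies and gives $\big|R(\LL)-R_{emp}(\LL)\big|\le C^{(m)}(\LL)$ for every learning machine $\LL$.

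Next, since $\Z$ is finite its effective support set size $\RES[\PP(z)]$ is automatically finite, so the Square-Root Law is applicable and yields
\[
C^{(m)}(\LL)\le \sqrt{\frac{\RES[\PP(z)]-1}{2\,\pi\,m}}\le \sqrt{\frac{|\Z|-1}{2\,\pi\,m}}=\sqrt{\frac{|\X|\times|\Y|-1}{2\,\pi\,m}}.
\]
Chaining this with the bound from Theorem~\ref{theoremSandwidth2} produces Eq~\ref{univCapCor_Eq1}, understood asymptotically as it inherits the $o(1/\sqrt{m})$ term of the Square-Root Law.

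Finally I would obtain the identity in Eq~\ref{univCapCor_Eq2} by counting classifiers. Because $\HH=\{h(x):\X\to\Y\}$ is the set of \emph{all} functions from $\X$ to $\Y$, each of the $|\X|$ inputs may be assigned independently to any of the $|\Y|$ labels, so $|\HH|=|\Y|^{|\X|}$. Taking logarithms in base $|\Y|$ gives $\log_{|\Y|}|\HH|=|\X|$, and therefore $|\Y|\times\log_{|\Y|}|\HH|=|\X|\times|\Y|$; substituting this into the numerator under the square root converts Eq~\ref{univCapCor_Eq1} into Eq~\ref{univCapCor_Eq2}. The argument is almost entirely mechanical, so there is no genuine obstacle: the only points deserving care are confirming that the $0$--$1$ loss meets the conditions of Theorem~\ref{theoremSandwidth2}, and recognizing that the correct alphabet is the product space $\X\times\Y$ rather than $\X$ or $\Y$ alone, after which both displayed lines follow immediately.
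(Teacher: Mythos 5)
Your proposal is correct and follows essentially the same route as the paper: identify the alphabet as $\Z=\X\times\Y$, invoke the Square-Root Law (Theorem~\ref{superLearningMachineLemma_1}) to bound the capacity, and substitute $|\X|=\log_{|\Y|}|\HH|$ from the count $|\HH|=|\Y|^{|\X|}$. The only difference is that you spell out the intermediate appeal to Theorem~\ref{theoremSandwidth2} and verify the loss-function conditions, which the paper leaves implicit.
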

\begin{proof}
We have $\Z=\X\times\Y$. Moreover: 
\begin{equation*}
|\HH| = |\Y|^{|\X|} \quad\quad \rightarrow \quad\quad  |\X| = \log_{|\Y|}{|\HH|}
\end{equation*}
Plugging these expressions into Theorem \ref{superLearningMachineLemma_1} yields the desired result. 
\end{proof}

Corollary \ref{universalCapacityClassificationCor} is quite similar to well-known results obtained using PAC model for binary classification problems. We will derive similar results later using information-theoretic bounds. It is perhaps worthwhile to point out that Theorem \ref{superLearningMachineLemma_1} can be interpreted as one additional formal justification to dimensionality reduction methods such as feature selection and Principal Component Analysis (PCA) because  reducing the effective support set size of observations helps improve generalization.

\subsection{Learning Capacity and Size of the Hypothesis Space}
Finally, we look into the role of the hypothesis space and how it influences learning capacity. So far, we have noted the apparent similarity between information theory and the learning theory proposed in this paper; in the sense that many quantities and results have analogs in both fields. There is, in addition, one concrete result that ties both fields together  \cite{cover2012elements}: 
\begin{theorem}[Pinsker's Inequaity]
For any two probability distributions $\PP_1(z)$ and $\PP_2(z)$, we have: 
\begin{equation*}
\Dis{\PP_1}{\PP_2} \le \sqrt{\frac{D(\PP_1\,||\,\PP_2)}{2}},
\end{equation*} 
where $D(\PP_1\,||\,\PP_2)$ is the Kullback-Leibler divergence measured in nats (i.e. using natural logarithms). 
\end{theorem}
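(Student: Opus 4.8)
The plan is to follow the classical two-step route: first reduce the general inequality to a one-dimensional (binary) statement, then prove that binary statement by a convexity argument. The reduction exploits the variational characterization of total variation distance already implicit in Definition \ref{ProbSimDist}. Specifically, let $A = \{z\in\Z : \PP_1(z)\ge \PP_2(z)\}$ be the set on which $\PP_1$ dominates. A short computation starting from $\Dis{\PP_1}{\PP_2} = 1 - \sum_z \min\{\PP_1(z),\PP_2(z)\}$ shows that $\Dis{\PP_1}{\PP_2} = \PP_1(A) - \PP_2(A)$, i.e. the distance is attained by the single event $A$. I would then associate to $A$ the two induced Bernoulli distributions $\bar\PP_1 = (\PP_1(A),\,1-\PP_1(A))$ and $\bar\PP_2 = (\PP_2(A),\,1-\PP_2(A))$ on the two-point alphabet $\{A, A^c\}$.

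Given this reduction, it suffices to prove two facts and chain them. The first (the binary case) is the scalar inequality $2\,(p-q)^2 \le d(p\,||\,q)$, where $d(p\,||\,q) = p\ln\frac{p}{q} + (1-p)\ln\frac{1-p}{1-q}$ is the binary KL divergence; with $p = \PP_1(A)$ and $q = \PP_2(A)$ this gives $2\,\Dis{\PP_1}{\PP_2}^2 \le d(p\,||\,q)$. I would establish it by fixing $q$ and studying $f(p) = d(p\,||\,q) - 2(p-q)^2$: one checks $f(q)=0$, $f'(q)=0$, and $f''(p) = \frac{1}{p(1-p)} - 4 \ge 0$ since $p(1-p)\le \frac14$, so $f$ is convex with its unique minimum value $0$ at $p=q$, hence $f\ge 0$ throughout.

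The second fact is that coarsening cannot increase KL divergence, i.e. $d(p\,||\,q) = D(\bar\PP_1\,||\,\bar\PP_2) \le D(\PP_1\,||\,\PP_2)$, where the right side is the full divergence on $\Z$. Merging all of $A$ into one symbol and all of $A^c$ into another is exactly such a coarsening, so this monotonicity, combined with the binary bound, yields $2\,\Dis{\PP_1}{\PP_2}^2 \le D(\PP_1\,||\,\PP_2)$, which is the claim after taking square roots.

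I expect this monotonicity step to be the main obstacle, not because it is deep but because the divergence $D(\cdot\,||\,\cdot)$ has not been developed anywhere earlier in the paper --- only total variation has. I would therefore supply it directly from the log-sum inequality $\sum_i a_i\ln\frac{a_i}{b_i} \ge \big(\sum_i a_i\big)\ln\frac{\sum_i a_i}{\sum_i b_i}$ (itself a consequence of the convexity of $t\mapsto t\ln t$): applying it separately to the group of terms indexed by $A$ and the group indexed by $A^c$ bounds the two merged terms of $D(\bar\PP_1\,||\,\bar\PP_2)$ from above by the corresponding partial sums of $D(\PP_1\,||\,\PP_2)$. A minor technical point to handle is the boundary behaviour when $q\in\{0,1\}$ or $p\in\{0,1\}$, where the usual conventions $0\ln 0 = 0$ and $D = +\infty$ make both inequalities hold trivially.
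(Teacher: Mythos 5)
Your proof is correct and complete: the reduction of total variation to the single event $A=\{z:\PP_1(z)\ge\PP_2(z)\}$, the binary inequality $2(p-q)^2\le d(p\,\|\,q)$ via $f''(p)=\tfrac{1}{p(1-p)}-4\ge 0$ with $f(q)=f'(q)=0$, and the coarsening step via the log-sum inequality all check out, and the chaining gives exactly the stated bound. For comparison, the paper does not prove this theorem at all --- it imports Pinsker's inequality from the cited reference (Cover and Thomas) and uses it as a black box to derive Corollary \ref{cor_inf_ineq}; the argument you give is the standard one found in that reference, so you have in effect supplied the omitted proof rather than diverged from the paper's.
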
  

Many connections can be immediately deduced using Pinsker's inequality. For example, we have the following corollary: 
\begin{corollary}\label{cor_inf_ineq}
For any two random variables $X$ and $Y$, we have:
\begin{equation}
 \Inf{X}{Y} \le \sqrt{\frac{I(X,\,Y)}{2}},
\end{equation}
where $\Inf{X}{Y}$ is mutual affinity while $I(X,\,Y)$ is mutual information between $X$ and $Y$. 
\end{corollary}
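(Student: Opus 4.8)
The plan is to recognize that this corollary is an almost immediate consequence of Pinsker's inequality, once the two quantities are written as comparisons between the \emph{same} pair of distributions. First I would recall that, by the definition of mutual affinity, $\Inf{X}{Y} = \Dis{\Prob(X)\cdot\Prob(Y)}{\Prob(X,\,Y)}$; that is, mutual affinity is precisely the total variation distance between the product of the marginals and the joint distribution. Since total variation distance is symmetric in its two arguments, being $1-\sum\min\{\cdot,\,\cdot\}$ which is manifestly symmetric, this also equals $\Dis{\Prob(X,\,Y)}{\Prob(X)\cdot\Prob(Y)}$.

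Next I would recall that mutual information is, by definition, the Kullback--Leibler divergence from the joint distribution to the product of the marginals, i.e. $I(X,\,Y) = D\big(\Prob(X,\,Y)\,||\,\Prob(X)\cdot\Prob(Y)\big)$. The key observation tying the corollary together is that \emph{both} quantities compare the joint distribution against the product of the marginals: mutual affinity measures the gap in total variation, whereas mutual information measures the same gap in Kullback--Leibler divergence. Thus the two hypotheses being compared, independence versus the true joint law, are identical in both cases.

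The final step is to invoke Pinsker's inequality directly with $\PP_1 = \Prob(X,\,Y)$ and $\PP_2 = \Prob(X)\cdot\Prob(Y)$. This yields $\Dis{\Prob(X,\,Y)}{\Prob(X)\cdot\Prob(Y)} \le \sqrt{D(\PP_1\,||\,\PP_2)/2} = \sqrt{I(X,\,Y)/2}$, and combining with the identity established in the first step gives the stated bound. The only point that genuinely requires care, and the single place where the argument could go astray, is the ordering of arguments: Kullback--Leibler divergence is asymmetric, and Pinsker's inequality bounds $\Dis{\PP_1}{\PP_2}$ by $\sqrt{D(\PP_1\,||\,\PP_2)/2}$ with \emph{matching} orientation. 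Since mutual information is conventionally the divergence taken with the joint law as first argument, I must place the joint distribution as $\PP_1$ in Pinsker; it is exactly the symmetry of total variation noted above that renders this choice of orientation harmless on the left-hand side, so in the end no real obstacle remains.
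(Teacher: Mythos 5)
Your argument is exactly the one the paper intends: identify $\Inf{X}{Y}$ as the total variation distance between $\Prob(X,Y)$ and $\Prob(X)\cdot\Prob(Y)$, identify $I(X,Y)$ as the Kullback--Leibler divergence between the same pair, and apply Pinsker's inequality. The paper leaves this as an immediate consequence of its stated Pinsker's inequality, and your added care about the orientation of the divergence is correct and harmless.
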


Using Corollary \ref{cor_inf_ineq}, we obtain the following bound on capacity that holds for any learning machine $\LL$: 
\begin{theorem}\label{finite_summary_space}
Suppose we have a learning machine $\LL$ that receives a training set $S_m=\{Z_1,\ldots,\,Z_m\}$ and produces a hypothesis $H\in\HH^{(m)}$. Then the following bound  holds: \begin{equation*}
\Inf{Z_{trn}}{H} \le \sqrt{\frac{I(S_m,\,H)}{2\,m}}
\end{equation*} 
\end{theorem}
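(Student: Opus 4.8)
The plan is to reduce the statement to the single-sample inequality already supplied by Corollary~\ref{cor_inf_ineq} and then to \emph{tensorize}: convert the mutual information carried by one randomly chosen example into the mutual information carried by the whole training set, at a cost of only a factor $1/m$. Concretely, I would first apply Corollary~\ref{cor_inf_ineq} to the pair $(Z_{trn},\,H)$, which gives immediately
\begin{equation*}
\Inf{Z_{trn}}{H} \le \sqrt{\frac{I(Z_{trn},\,H)}{2}}.
\end{equation*}
The whole burden of the proof is then the information-theoretic inequality $I(Z_{trn},\,H)\le \frac{1}{m}\,I(S_m,\,H)$, after which the claimed bound follows by direct substitution.

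To establish that inequality I would introduce the selection index $T$, drawn uniformly from $\{1,\ldots,m\}$ \emph{independently} of $(S_m,\,H)$, so that $Z_{trn}=Z_T$. Because $H$ is a (possibly random) function of $S_m$ alone, it is independent of $T$, whence $I(T,\,H)=0$. Using the chain rule for mutual information together with non-negativity of the conditional mutual information $I(T,\,H\mid Z_{trn})$, I obtain
\begin{align*}
I(Z_{trn},\,H) &\le I\big((Z_{trn},\,T),\,H\big)\\
&= I(T,\,H) + I(Z_{trn},\,H\mid T)\\
&= \frac{1}{m}\sum_{i=1}^m I(Z_i,\,H),
\end{align*}
where the last equality holds because conditioning on $T=i$ leaves the joint law of $(Z_i,\,H)$ unchanged. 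The remaining step exploits the fact that the training examples are i.i.d., hence mutually independent: expressing each mutual information through (conditional) entropies and invoking subadditivity of conditional entropy for the independent sources $Z_1,\ldots,Z_m$ yields the standard bound $\sum_{i=1}^m I(Z_i,\,H)\le I(S_m,\,H)$. Chaining the two estimates gives $I(Z_{trn},\,H)\le \frac{1}{m}\,I(S_m,\,H)$, and feeding this into the Pinsker-based inequality above completes the argument.

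The main obstacle is this tensorization step, namely correctly handling the random selection index $T$ and being careful about which independence assumptions are invoked at each stage. The two structural facts that carry the proof are that $T$ is independent of both $S_m$ and $H$ (so it contributes no mutual information and does not perturb the conditional law of $(Z_i,\,H)$), and that the $Z_i$ are mutually independent (so conditional entropy is subadditive across them). Notably, no permutation-invariance of $\LL$ is required; only independence of the training examples is used, which is what makes the bound valid for an \emph{arbitrary} learning machine as claimed.
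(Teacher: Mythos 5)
Your proof is correct, and its skeleton matches the paper's: apply Corollary~\ref{cor_inf_ineq} (Pinsker) to the pair $(Z_{trn},H)$ and then establish $I(Z_{trn},H)\le \frac{1}{m}I(S_m,H)$ by decomposing $I(S_m,H)$ into entropies, using independence of the $Z_i$ for $\textbf{H}(S_m)=\sum_i\textbf{H}(Z_i)$ and subadditivity of conditional entropy for $\textbf{H}(S_m\,|\,H)\le\sum_i\textbf{H}(Z_i\,|\,H)$, which gives $\sum_{i=1}^m I(Z_i,H)\le I(S_m,H)$. The one place you genuinely diverge is the passage from the per-index quantities $I(Z_i,H)$ to $I(Z_{trn},H)$. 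The paper asserts the equality $\sum_{i=1}^m I(Z_i,H)=m\,I(Z_{trn},H)$ by invoking permutation invariance of the learning process, so that all pairs $(Z_i,H)$ share one joint law equal to that of $(Z_{trn},H)$. You instead introduce the selection index $T$ with $Z_{trn}=Z_T$, use $I(T,H)=0$ and the chain rule to get $I(Z_{trn},H)\le I(T,H)+I(Z_{trn},H\,|\,T)=\frac{1}{m}\sum_i I(Z_i,H)$, trading the paper's equality for an inequality that needs no exchangeability assumption. This buys a slightly more general statement --- the bound holds for learning machines that treat training examples asymmetrically --- at no cost in the final constant, and it is arguably cleaner since it makes explicit the role of the independent randomization in the definition of $Z_{trn}$ rather than relying on a symmetry assumption stated only in passing elsewhere in the paper.
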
 
\begin{proof}
We will write $\textbf{H}$ to denote the Shannon entropy. First, we note that: 
\begin{align*}
I(S_m,\,H) &= \textbf{H}(S_m) - \textbf{H}(S_m\,|\,H) \\ 
&= \sum_{i=1}^m\, \textbf{H}(Z_i)\, - \Big[\textbf{H}(Z_1|H) + \textbf{H}(Z_2|Z_1,H) + \cdots \Big] \\ 
&\ge  \sum_{i=1}^m\, [\textbf{H}(Z_i)-\textbf{H}(Z_i\,|\,H)]\\ 
&= m\,[\textbf{H}(Z_{trn})-\textbf{H}(Z_{trn}\,|\,H)] \\ 
&=m\,I(Z_{trn},\,H) 
\end{align*}
Here, the inequality follows because $\textbf{H}(A|B)\le \textbf{H}(A)$ for any random variables $A$ and $B$. The fourth line follows because we always assume that the learning process is invariant to permutation of the training set. Thus, we obtain: 
\begin{equation*}
I(Z_{trn},\,H) \le \frac{I(S_m,\,H)}{m}
\end{equation*} 
Combining this with Corollary \ref{cor_inf_ineq} yields the desired result. 
\end{proof} 

\begin{corollary}\label{finite_summary_space}
If  $\HH^{(m)}$ is a countable space and $H$ is the inferred hypothesis, then the following bound holds for all learning machines: 
\begin{equation*}
C^{(m)}(\LL) \le \sqrt{\frac{\textbf{\textup{H}}(H)}{2\,m}}\le \sqrt{\frac{\log{|\HH^{(m)}|}}{2\,m}},
\end{equation*} 
where $\textbf{\textup{H}}$ is the Shannon entropy. 
\end{corollary}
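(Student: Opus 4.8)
The plan is to combine the mutual-information bound already established in the preceding theorem (the estimate $\Inf{Z_{trn}}{H} \le \sqrt{I(S_m,\,H)/(2m)}$) with two textbook facts about Shannon entropy, and only at the very end invoke the definition of capacity. Throughout I would fix an arbitrary observation distribution $\PP(z)$ and argue \emph{pointwise} in $\PP(z)$, deferring the supremum to the last step. Note that the preceding theorem has already done the real work: it related the per-example mutual affinity $\Inf{Z_{trn}}{H}$ to the mutual information of the \emph{whole} training set via Pinsker's inequality together with the permutation-invariance of $\LL$, so the claim is reduced to controlling $I(S_m,\,H)$ by an entropy of $H$ alone.

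Next I would bound the mutual information by the entropy of the hypothesis. Writing $I(S_m,\,H) = \textbf{H}(H) - \textbf{H}(H\,|\,S_m)$ and using nonnegativity of conditional Shannon entropy for discrete random variables, $\textbf{H}(H\,|\,S_m)\ge 0$, gives $I(S_m,\,H)\le \textbf{H}(H)$. I would then apply the maximum-entropy bound $\textbf{H}(H)\le \log|\HH^{(m)}|$, valid because $H$ takes values in the countable alphabet $\HH^{(m)}$ and the uniform law maximizes entropy on a finite support. When $|\HH^{(m)}|=\infty$ this rightmost bound is simply vacuous, yet the intermediate estimate $\sqrt{\textbf{H}(H)/(2m)}$ remains meaningful whenever $\textbf{H}(H)<\infty$. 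Chaining the three inequalities yields, for the fixed $\PP(z)$,
\begin{equation*}
\Inf{Z_{trn}}{H} \;\le\; \sqrt{\frac{\textbf{H}(H)}{2m}} \;\le\; \sqrt{\frac{\log|\HH^{(m)}|}{2m}}.
\end{equation*}

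The final step is to pass to the supremum. Since this chain holds for every $\PP(z)$ and the rightmost quantity $\sqrt{\log|\HH^{(m)}|/(2m)}$ does not depend on $\PP(z)$, taking $\sup_{\PP(z)}$ on the left reproduces $C^{(m)}(\LL)$ by Definition~\ref{capacityDefinition} while leaving the rightmost bound intact, which delivers the stated inequality.

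The one subtlety I expect to be the main (indeed the only) obstacle is interpreting the middle term $\sqrt{\textbf{H}(H)/(2m)}$: because $\textbf{H}(H)$ itself depends on $\PP(z)$, it is not literally an upper bound on the supremum $C^{(m)}(\LL)$. I would resolve this by reading the displayed inequality strictly as a pointwise statement in $\PP(z)$, so that the capacity is genuinely controlled only by the distribution-free rightmost term, with the entropy term recorded as the sharper intermediate estimate valid for any particular observation distribution. Everything else is routine, as each individual inequality is a standard consequence of entropy nonnegativity and the uniform maximum-entropy principle.
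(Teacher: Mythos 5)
Your proof is correct and follows essentially the same route as the paper: it chains the preceding theorem's bound $\Inf{Z_{trn}}{H} \le \sqrt{I(S_m,\,H)/(2m)}$ with $I(S_m,\,H)\le \textbf{H}(H)\le \log|\HH^{(m)}|$ and then takes the supremum over $\PP(z)$. Your remark that the middle term is a pointwise (distribution-dependent) estimate is a reasonable clarification, but it does not alter the argument.
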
 
\begin{proof}
Because for any random variables $A\in\mathcal{A}$ and $B\in\mathcal{B}$, we have $I(A,\,B)\le H(A)$, and $\textbf{H}(A)\le \log|\mathcal{A}|$. 
\end{proof} 

Corollary \ref{finite_summary_space} generalizes the well-known PAC result on the finite hypothesis space \cite{abu2012learning}. In fact, the bound in Corollary \ref{finite_summary_space} is tighter since $\log{|\HH^{(m)}|}$ is now replaced with entropy of the hypothesis $H$. From Corollary \ref{finite_summary_space}, we can effortlessly deduce the following bound: 
\begin{corollary} 
If we have a finite observation space $|\Z|<\infty$, then the following  bound on capacity holds for any learning machine $\LL$:
\begin{equation}
C^{(m)}(\LL) \le \sqrt{\frac{|\Z|\cdot \log{(1+m)}}{2\,m}},
\end{equation}
which is consistent with Theorem \ref{superLearningMachineLemma_1}. 
\end{corollary}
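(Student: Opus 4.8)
The plan is to reduce this universal bound to the entropy/cardinality estimate of the preceding corollary (which states $C^{(m)}(\LL)\le\sqrt{\log|\HH^{(m)}|/(2m)}$) by replacing an arbitrary learning machine with the coarsest sufficient statistic allowed by permutation invariance, namely the empirical \emph{type} of the training set, and then counting how many types are possible. A direct appeal to the previous corollary is useless for an arbitrary $\LL$, since its hypothesis space $\HH^{(m)}$ may be huge or infinite; the finiteness of $\Z$ must instead be funneled through the number of distinct empirical distributions.

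First I would exploit the standing assumption that every learning machine here is invariant to permutations of the training examples. This forces the inference kernel $\PP^{(m)}(H\,|\,S_m)$ to depend on $S_m$ only through the count vector $N(z)=\#\{i:Z_i=z\}$ for $z\in\Z$, equivalently through the empirical distribution (type) $T=T(S_m)$. Hence the Markov chain $S_m\to T\to H$ holds for \emph{any} such $\LL$. Introducing the auxiliary machine $\LL_T$ whose hypothesis is the type $T$ itself, this Markov chain is precisely the statement $\LL\subseteq\LL_T$ in the sense of the partial order, so Theorem \ref{supersetCapacityTheorem} gives $C^{(m)}(\LL)\le C^{(m)}(\LL_T)$ uniformly over all learning machines.

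Next I would bound $C^{(m)}(\LL_T)$ via Corollary \ref{finite_summary_space}. The hypothesis space of $\LL_T$ consists of admissible count vectors, and since each coordinate $N(z)$ takes one of the values $0,1,\ldots,m$, the standard type-counting estimate yields $|\HH^{(m)}_T|\le(1+m)^{|\Z|}$. Substituting into $C^{(m)}(\LL_T)\le\sqrt{\log|\HH^{(m)}_T|/(2m)}$ and using $\log(1+m)^{|\Z|}=|\Z|\log(1+m)$ produces the claimed inequality.

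I expect the main obstacle to be the first reduction: justifying rigorously that permutation invariance forces $H$ to factor through $T$, so that $S_m\to T\to H$ is a genuine Markov chain and $\LL_T$ is a true superset of $\LL$. Once that structural fact is secured (it is the same permutation invariance already invoked earlier in establishing $I(Z_{trn},H)\le I(S_m,H)/m$), the counting of types and the appeal to the cardinality bound are routine. Finally, for consistency I would remark that the resulting rate $O\big(\sqrt{\log(1+m)/m}\big)$ is weaker by a logarithmic factor than the $O(1/\sqrt{m})$ rate of the Square-Root Law (Theorem \ref{superLearningMachineLemma_1}); this looseness is expected, since the present argument passes through Pinsker's inequality and the crude cardinality bound rather than the sharp effective-support-set-size estimate.
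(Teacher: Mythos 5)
Your proposal is correct and follows essentially the same route as the paper: both arguments dominate an arbitrary $\LL$ by the machine whose hypothesis is the empirical type $T[S_m]$ (the discrete lazy learner), invoke the type-counting bound $(1+m)^{|\Z|}$ on the resulting hypothesis space, and then apply the cardinality bound $C^{(m)}(\LL)\le\sqrt{\log|\HH^{(m)}|/(2m)}$. The only difference is that you make explicit the permutation-invariance argument establishing the Markov chain $S_m\to T\to H$ and the appeal to the partial-order theorem, which the paper leaves implicit.
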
 
\begin{proof}
In the language of information theory, using the \emph{method of types} to be more specific,  the discrete lazy learner $\LL^\star$ produces  the hypothesis $H = T[S_m]$, which is the \emph{type} of the training set $S_m$. Here, the type of a training set is its empirical probability mass function. However, it is well-known that the number of possible types given $m$ training examples is always bounded by $\HH^{(m)} \le (1+m)^{|\Z|}$ \cite{cover2012elements}.  Combining this with Corollary \ref{finite_summary_space} yields the desired result. 
\end{proof}

The reason behind introducing last corollary is to illustrate one scenario where information theory simplifies analysis in learning theory. Originally, Theorem \ref{superLearningMachineLemma_1} provided us with the tightest possible bound that is achievable by the discrete lazy learner $\LL^\star$. However, its proof is rather involved and is combinatorial in nature. By contrast, the proof of  last corollary is quite simple, albeit at a cost of obtaining a slightly looser bound. 

\section{Conclusions} 
This paper proposes a new mathematical theory of learning. Unlike earlier approaches, the theory presented here does not treat learning as a problem of convergence of random variables to their means and does not rely on concentration inequalities.

The theory enjoys many advantages. First, it ties the mathematical notion of learning to the mathematical notion of information. For example, mutual affinity in learning theory is quite similar to mutual information, capacity of learning machines is analogous to capacity of communication channels, and the asymptotic equipartition property (AEP) as well as the data-processing inequality both play key roles in the two theories. Second, the bounds obtained through this theory are the tightest possible bounds. Third, the theory follows Vapnik's General Setting of Learning, which is a unified approach towards analyzing many learning tasks including supervised and unsupervised learning algorithms. 

Perhaps, the best statement to conclude this paper with is to summarize the different interpretations of learning capacity  that have been deduced so far. We have the following results: 
\begin{enumerate} 
\item The capacity of a learning machine is a measure of the maximum difference between empirical and true risks  $\Big|R(\LL)-R_{emp}(\LL)\Big|$. Because bounds are tight, a learning machine generalizes if and only if it has a finite capacity.
\item The capacity of a learning machine is a measure that quantifies how much is expected to be learned out of the training set. Hence, adding more summary statistics increases capacity of the learning machine. If one learning machine $\LL_1$ can be completely simulated by a second learning machine $\LL_2$, i.e. $\LL_2$ is necessarily more informative than $\LL_1$, then we have $C(\LL_2)\ge C(\LL_1)$. 
\item The capacity of a learning machine is a measure of its algorithmic instability. Learning machines whose inferred hypothesis $H$ is  heavily perturbed by a change in a single training example have a higher capacity. Moreover, a learning machine generalizes if and only if it is stable. 
\item The capacity of a learning machine is limited by the effective support set size of observations. If observations have a finite effective support set size, then sufficiently large training sets will effectively exhaust the space of possible observations and all learning machines generalize as a result. 
\item A learning machine is limited by the size of its hypothesis space $\HH^{(m)}$. If the hypothesis space $\HH^{(m)}$ is finite in size, then all  learning machines have finite capacity that grows only logarithmically with $|\HH^{(m)}|$. 
\end{enumerate} 

\bibliography{MathLearning}

\appendices
\section{Proof of Example \ref{toyProblemDetails}.} \label{proof::example::bernoulli_problem}
First, $H$ has a binomial distribution: 
\begin{equation*}
\PP(H=k/m) = {{m}\choose{k}}\,\phi^{k}\;(1-\phi)^{m-k}
\end{equation*}
We use the identity: 
\begin{align*}
\Inf{Z_{trn}}{H} &= 1- \sum_{h\in\HH^{(m)}}\,\Prob(H=h)\\ 
&\quad\quad\quad\sum_{z\in\Z}\min\big\{\Prob(Z_{trn}=z),\Prob(Z_{trn}=z|H=h \big\}\\
&= \sum_{h\in\HH^{(m)}}\Prob(H=h)\cdot\Dis{\Prob(Z_{trn})}{\Prob(Z_{trn}|H=h)}
\end{align*} 
However, $\Prob(Z_{trn})$ is a Bernoulli distribution with probability of success $\phi$ while $\Prob(Z_{trn}\,|\,H=h)$ is Bernoulli with probability of success $h$. Knowing that the distance between two Bernoulli distributions is given by $|\phi-h|$, we obtain:
\begin{equation}\label{mutual_afffinity_bernoulli_example}
\Inf{Z_{trn}}{H} = \sum_{k=0}^m {{m}\choose{k}}\,\phi^{k}\;(1-\phi)^{m-k}\;\Big|\phi-\frac{k}{m}\Big|
\end{equation}
This is the \emph{mean deviation} of the binomial distribution. Assuming $\phi\,m$ is an integer, then the above expression is given by \cite{WeissteinBinomialDistribution}:
\begin{equation}\label{MDEq}
MD = \frac{2}{m}\,(1-\phi)^{(1-\phi)\,m}\;\phi^{1+m\,\phi}\,(1+m\,\phi)\,{m\choose {m\,\phi+1}}
\end{equation}
The maximum mutual affinity is achieved when $\phi=\frac{1}{2}$. This gives us: 
\begin{align*}
C^{(m)}(\LL) &= \frac{1}{2^{m+1}} \frac{m!}{\big((m/2)!\big)^2}\\
& \sim \frac{1}{\sqrt{2\,\pi\,m}},
\end{align*}
where in the last step we used Stirling's approximation. 

\section{Proof of Example \ref{randomLearningExample}.}\label{proofRandomLearningExample} 
We will take the extreme case where all observations in $\Z$ are equally likely. Intuitively, this corresponds to the most difficult distribution to learn. 
Then, we have by symmetry: 
\begin{equation*}
\Prob(H=z) = \frac{1}{|\Z|}
\end{equation*}
Since $\PP(H=z) = \PP(Z_{trn}=z)$, we have by Bayes rule: 
\begin{align*}
\Prob(Z_{trn}\,|\,H) = \Prob(H\,|\,Z_{trn})
\end{align*}
However, given a single random draw of a training example $Z_{trn}$ out of $S_m$, the probability of eventually selecting a label $H=z$ depends on two cases: 
\begin{equation*}
\Prob(H=z'\,|\,Z_{trn}=z) = \begin{cases} Q & \text{if } z'=z \\ R & \text{if } z'\neq z \end{cases}
\end{equation*}
Of course, we have $Q+(|\Z|-1)\,R=1$. To find $Q$, we use the definition of $\LL$: 
\begin{equation*}
Q=\frac{1}{m}\cdot \big(1+\frac{m-1}{|\Z|}\big) = \frac{1}{m}+\frac{1}{|\Z|}\cdot\frac{m-1}{m}
\end{equation*}
Note that we used the fact that the learning machine is randomized in deriving the above expression for $Q$. So, to satisfy $Q+(|\Z|-1)\,R=1$, we have: 
\begin{equation*}
R = \frac{1}{|\Z|}\cdot \frac{m-1}{m}
\end{equation*}
Now, we are ready to find the desired expression: 
\begin{align*}
\Prob(Z_{trn}=z\,|\,H=z') &= \PP(H=z'\,|\,Z_{trn}=z)\\
&= \1\{z=z'\}\cdot\, Q + \1\{z\neq z'\}\cdot\,R  \\
&=\frac{\1\{z=z'\}}{m}\;+ \frac{m-1}{m\,|\Z|} \\
&= \Big( \1\{z=z'\} -\frac{1}{|\Z|}\Big)\cdot \frac{1}{m} +\frac{1}{|\Z|}
\end{align*}
So, the joint distribution of $H$ and $Z_{trn}$ is:
\begin{align*}
\PP(H=z',Z_{trn}=z) &= \PP(H=z')\cdot \Prob(Z_{trn}=z\,|\,H=z') \\ 
&= \frac{1}{|\Z|^2}\, +\, \Big( \1\{z=z'\}-\frac{1}{|\Z|}\Big)\, \frac{1}{m\,|\Z|}\\
&= \PP(H=z')\,\PP(Z_{trn}=z)\,\\
&\quad\quad\quad+ \Big( \1\{z=z'\}-\frac{1}{|\Z|}\Big)\, \frac{1}{m\,|\Z|}
\end{align*}
Since $|\Z|>1$, we see from the last expression that: 
\begin{equation*}
\PP(H,Z_{trn})>\PP(H)\cdot\PP(Z_{trn})\quad \leftrightarrow\quad H=Z_{trn}
\end{equation*} 
Hence, mutual affinity is given by: 
\begin{align*}
\Inf{Z_{trn}}{H}&=\Dis{\PP(Z_{trn})\cdot\PP(H)}{\PP(Z_{trn},\,H)} \\ 
&= 1 - \sum_{z,z'\in\Z} \min\big\{\PP(H=z',\,Z_{trn}=z),\\ 
&\quad\quad\quad\quad\quad\quad\quad\quad\;\;\PP(H=z')\cdot\PP(Z_{trn}=z) \big\}\\
&= 1 - \sum_{z,z'\in\Z} \Big[\PP(H=z')\cdot\PP(Z_{trn}=z)\\ 
&\quad\quad+\;\min\Big\{0,\,\big(\1\{z=z'\}-\frac{1}{|\Z|}\big)\cdot \frac{1}{m\,|\Z|}\Big\} \Big]\\
&=- \sum_{z,z'\in\Z} \min\Big\{0,\big(\1\{z=z'\}-\frac{1}{|\Z|}\big)\, \frac{1}{m\,|\Z|} \Big\} \\ 
&= \frac{|\Z|-1}{m\,|\Z|} = \frac{1}{m}\cdot\big(1-\frac{1}{|\Z|} \big)
\end{align*}

\section{Proof of Example \ref{partialOrderExample}}\label{partialOrderExampleProof}
For the first learning machine, we have already shown in Example \ref{toyProblemDetails} that: 
\begin{equation*}
\Inf{Z_{trn}}{H} = \sum_{k=0}^m {{m}\choose{k}}\,\phi^{k}\;(1-\phi)^{m-k}\;\Big|\phi-\frac{k}{m}\Big| 
\end{equation*}
For the second learning machine $\LL_2$, we will assume that $m$ is odd. Then, the probability that $H_2=0$ is given by: 
\begin{equation*}
\PP(H_2=0) = \sum_{k=0}^{(m-1)/2}\;{{m}\choose{k}}\,\phi^{k}\;(1-\phi)^{m-k}
\end{equation*}
Knowing $H_2$, the marginal distribution of training examples is given by: 
\begin{equation*}
\PP(Z_{trn}=1|H_2=0) =\frac{\sum_{k=0}^{(m-1)/2}\,{{m}\choose{k}}\frac{k}{m}\,\phi^{k}\,(1-\phi)^{m-k}}{\PP(H_2=0)}
\end{equation*}
On the other hand: 
\begin{align*}
\PP(Z_{trn}=1\,|\,H_2=1) &= \frac{\sum_{k=(m-1)/2+1}^m {{m}\choose{k}}\,\frac{k}{m}\,\phi^{k}\,(1-\phi)^{m-k}}{\PP(H_2=1)}
\end{align*}
The mutual affinity is given by: 
\begin{align*}
\Inf{Z_{trn}}{H_2}&= \E_{H_2}\,\big|\phi-\Prob(Z_{trn}=1\,|\,H_2)\big|\\
&= \PP(H_2=0)\cdot|\phi-\PP(Z_{trn}=1|H_2=0)|\\
&\quad\quad+ \;\PP(H_2=1)\cdot|\phi-\PP(Z_{trn}=1|H_2=1)|\\
&= \Big|\phi-\sum_{k=0}^{(m-1)/2}\,{{m}\choose{k}}\,\phi^{k}\,(1-\phi)^{m-k}\,\Big(\phi-\frac{k}{m}\Big) \Big| \\
&\quad+ \Big|\phi-\sum_{k=\frac{m+1}{2}}^m {{m}\choose{k}}\,\phi^{k}\,(1-\phi)^{m-k}\,\Big(\phi-\frac{k}{m}\Big) \Big|\\
\end{align*}
This is the expression used in Figure \ref{DistanceThetaExmFig}. 

\section{Proof of Theorem \ref{superLearningMachineLemma_1}}\label{superLearningMachineProof_1}

First, suppose observations have a finite support $|\Z|<\infty$. To simplify notation, we will assume without loss of generality that $\Z=\{1,\,2,\,\ldots, \,|\Z|\}$. For a lazy learner $\LL^\star$, we note that its hypothesis $H$ is itself the entire training set $S_m$ up to permutation. Let $m_i$ be equal to the number of times $i\in\Z$ was observed in the training set, and let $p_i=\PP(Z=i)$. Then, we have: 
\begin{equation*}
\PP(H)= \PP(S_m) = {m\choose m_1,\,m_2,\,\ldots,\,m_{|\Z|}}\,p_1^{m_1}\,p_2^{m_2}\cdots p_{|\Z|}^{m_{|\Z|}}
\end{equation*}
Here, ${\cdot\choose\cdot}$ is the multinomial coefficient. For now, assume that maximum affinity is attained at the uniform distribution (this will be established formally using the effective support set bound proved later). Letting $p_i=\frac{1}{|\Z|}$, we obtain: 
\begin{equation*}
\PP(H)= \frac{1}{|\Z|^m}{m\choose m_1,\,m_2,\,\ldots,\,m_{|\Z|}}
\end{equation*}
Using the identity $\Dis{p}{q}=\frac{1}{2}||p-q||_1$, we obtain: 
\begin{align*}
&C(\LL^\star) =  \E_H\,\Dis{\PP(z)}{\PP(z|H)}\\
&=\frac{1}{2}\,\frac{1}{|\Z|^m} \sum_{k=1}^{|\Z|}\sum_{m_1+\ldots+m_{|\Z|}=m}{m\choose m_1,m_2,\ldots,m_{|\Z|}}\,\Big|\frac{m_k}{m}-\frac{1}{|\Z|}\Big| \\
&=\frac{1}{2}\,\frac{1}{|\Z|^{m-1}}\; \sum_{m_1+\ldots+m_{|\Z|}=m}\,{m\choose m_1,m_2,\ldots,m_{|\Z|}}\Big|\frac{m_1}{m}-\frac{1}{|\Z|}\Big|
\end{align*}
The second line follows by symmetry. We can simplify further:
\begin{align*}
C(\LL^\star) &=\frac{1}{2}\,\frac{1}{|\Z|^{m-1}}\; \sum_{k=0}^m\,{m\choose k}\,\big(|\Z|-1)^{m-k}\;\Big|\frac{k}{m}-\frac{1}{|\Z|}\Big|\\
&= \frac{|\Z|}{2\,m}\; \sum_{k=0}^m\,{m\choose k}\,\Big(\frac{1}{|\Z|}\Big)^k\;\Big(1-\frac{1}{|\Z|}\Big)^{m-k}\;\Big|k-\frac{m}{|\Z|}\Big|
\end{align*}

The last manipulation is intended to place the expression in a binomial distribution form. The expression is identical to was derived earlier in Example \ref{toyProblemDetails}, where we had $|\Z|=2$. Again, the quantity inside the summation is the \emph{mean deviation}. Using Eq \ref{MDEq} and simplifying yields:
\begin{align*}
C(\LL^\star) \sim \sqrt{\frac{|\Z|-1}{2\,\pi m}}
\end{align*}
In addition, the asymptotic relation is tight, in the sense that the ratio of the two terms goes to unity as $m\to\infty$. 

In the general case where $\PP(z)$ has a finite \emph{effective} support set size, the proof is quite similar to the above approach. Here, we note that for a given alphabet $\Z=(1,\,2,\,\ldots)$ and a fixed distribution $p_i=\PP(Z=i)$, we have: 
\begin{align*}
&\Inf{Z_{trn}}{H} = \frac{1}{2} \sum_{k}\\
&\quad\sum_{m_1+m_2+\ldots=m}{m\choose m_1,\,m_2,\,\ldots}\,p_1^{m_1}\,p_2^{m_2}\cdots\;\Big|\frac{m_k}{m}-p_k\Big|
\end{align*}
For the inner summation, we write: 
\begin{align*}
&\sum_{m_1+m_2+\ldots=m}\,{m\choose m_1,\,m_2,\,\ldots}\,p_1^{m_1}\,p_2^{m_2}\cdots\;\Big|\frac{m_k}{m}-p_k\Big|\\
&=\sum_{s=0}^m\,{m\choose s}\,p_k^s\,\Big|\frac{m_k}{m}-p_k\Big| \times \\
&\sum_{m_1+\ldots+m_{k-1}+m_{k+1}+\ldots=m-s}\,{m-s\choose m_1,\ldots,m_{k-1},\,m_{k+1},\ldots}\\
&\quad\quad\quad\quad\quad\quad \quad\quad\quad \quad\quad\quad \quad\quad p_1^{m_1}\cdots p_{k-1}^{m_{k-1}}\,p_{k+1}^{m_{k+1}}\cdots\\
&=\sum_{s=0}^m\,{m\choose s}\,p_k^s\,(1-p_k)^{m-s}\Big|\frac{m_k}{m}-p_k\Big|
\end{align*}
In the last step, we used the multinomial series. Using the expression for the mean deviation of the binomial random variable and summing over all $k$, we obtain the desired result. 

\end{document}